\newcommand{\yunhaocomment}[1]{\textcolor{blue}{[Yunhao: #1]}}
\newcommand{\markcomment}[1]{\textcolor{red}{[Mark: #1]}}
\newcommand{\zeyucomment}[1]{\textcolor{orange}{[Zeyu: #1]}}
\newcommand{\todom}[1]{\textcolor{green}{[Michal: #1]}}
\newcommand{\danielcomment}[1]{\textbf{\textcolor{teal}{[DanielG: #1]}}}
\newcommand{\bernacomment}[1]{\textbf{\textcolor{olive}{[Berna: #1]}}}
\newcommand{\yunhaocomment}[1]{\textcolor{blue}{}}
\newcommand{\markcomment}[1]{\textcolor{red}{}}
\newcommand{\zeyucomment}[1]{\textcolor{red}{}}
\newcommand{\todom}[1]{\textcolor{green}{}}
\newcommand{\danielcomment}[1]{\textbf{}}
\newcommand{\bernacomment}[1]{}
\renewcommand\bibentry[1]{\nocite{#1}{\frenchspacing\@nameuse{BR@r@#1\@extra@b@citeb}}}
\newtheoremstyle{definition}
{3pt} % Space above
{3pt} % Space below
{} % Body font
{} % Indent amount
{\bfseries} % Theorem head font
{.} % Punctuation after theorem head
{.5em} % Space after theorem head
{} % Theorem head spec (can be left empty, meaning `normal')
\theoremstyle{definition}
\setlist[itemize]{leftmargin=0.5cm}
\title{Generalized Preference Optimization:
A Unified Approach to Offline Alignment}
\author[$\beta$]{Yunhao Tang}
\author[$\beta$]{Zhaohan Daniel Guo}
\author[$\beta$]{Zeyu Zheng}
\author[$\beta$]{Daniele Calandriello}
\author[$\beta$]{R\'emi Munos}
\author[$\beta$]{Mark Rowland}
\author[$\beta$]{Pierre Harvey Richemond}
\author[$\beta$]{Michal Valko}
\author[$\beta$]{Bernardo \'Avila Pires}
\author[$\beta$]{Bilal Piot}
\affil[$\beta$]{Google DeepMind}
\begin{abstract}
    Offline preference optimization allows fine-tuning large models directly from offline data, and has proved effective in recent alignment practices. We propose generalized preference optimization (GPO), a family of offline losses parameterized by a general class of convex functions. GPO enables a unified view over preference optimization, encompassing existing algorithms such as DPO, IPO and SLiC as special cases, while naturally introducing new variants. The GPO framework also sheds light on how offline algorithms enforce regularization, through the design of the convex function 
    that defines the loss. Our analysis and experiments reveal the connections and subtle differences between the offline regularization and the KL divergence regularization intended by the canonical RLHF formulation. In a controlled setting akin to \citet{gao2023scaling}, we also show that different GPO variants achieve similar trade-offs between regularization and performance, though the optimal values of hyper-parameter might differ as predicted by theory. In all, our results present new algorithmic toolkits and empirical insights to alignment practitioners.
\end{abstract}
\begin{document}

\maketitle

\section{Introduction}

Reinforcement learning from human feedback (RLHF) has been a canonical paradigm for aligning powerful AI systems along human values \citep{christiano2017deep,ouyang2022training}, as demonstrated by recent advances in large language models (LLMs) \citep{achiam2023gpt,team2023gemini}. RLHF consists of two steps: reward modeling, which trains a reward model $r_\phi$ to capture human preferences from a dataset of pairwise comparison; and regularized policy optimization, which aligns the AI systems against the learned reward model, more formally as below 
\begin{align*}
    \max_\theta  \underbrace{\mathbb{E}_{y\sim\pi_\theta}\left[r_\phi(y)\right]}_{\text{reward maximization}} - \beta \underbrace{\mathbb{KL}(\pi_\theta,\pi_\text{ref})}_{\text{regularization}}.
\end{align*}

\begin{figure}[t]
    \centering
    \includegraphics[width=0.48\textwidth]{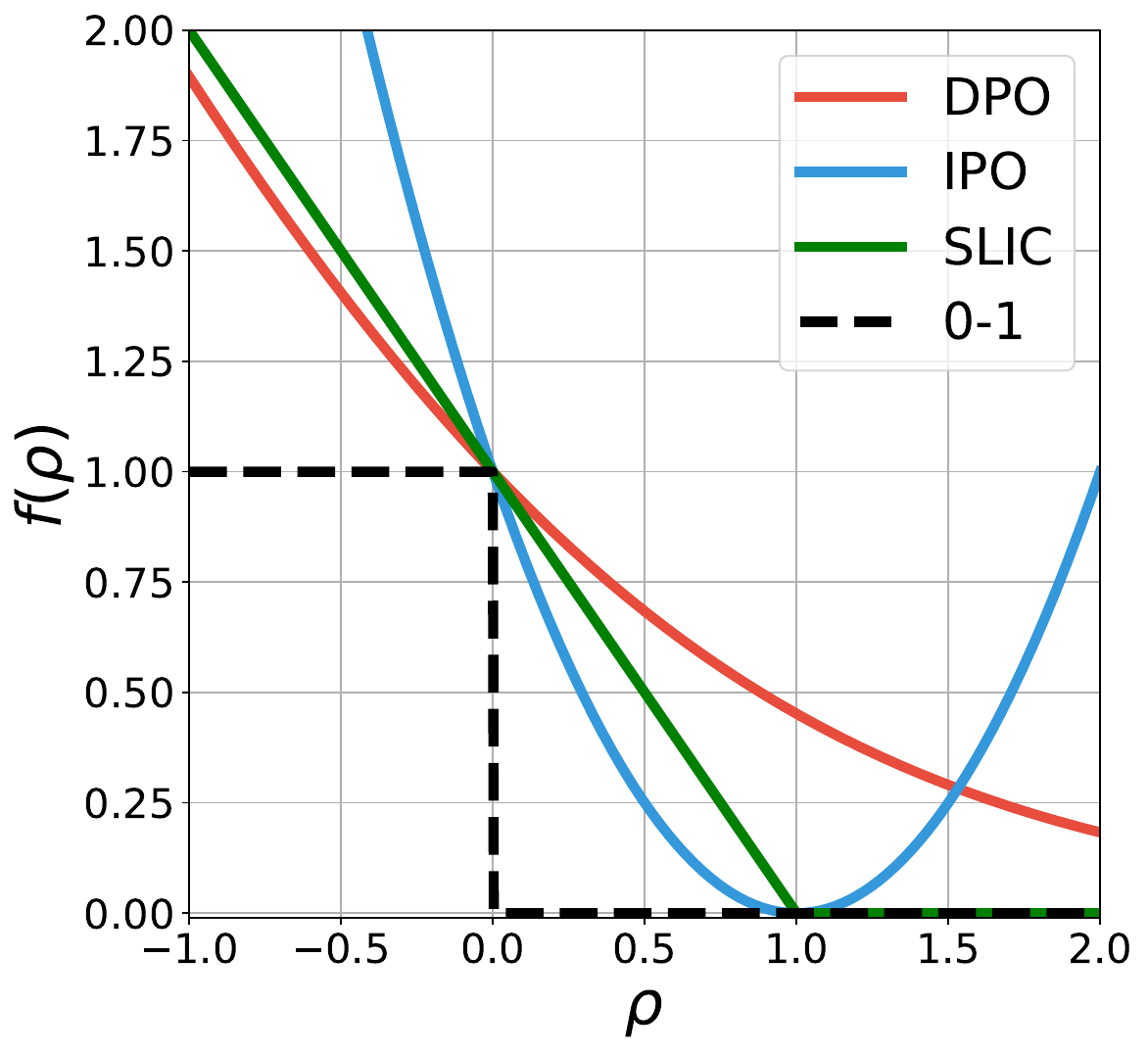}
    \caption{\small{Illustration of offline preference optimization losses $\mathbb{E}_{(y_w,y_l)\sim\mu}\left[f\left(\rho_\theta\right)\right]$ as a function of the difference of log ratio $\rho_\theta=\log \pi_\theta(y_w) / \pi_\text{ref}(y_w) - \log \pi_\theta(y_l)/ \pi_\text{ref}(y_l)$. DPO  applies the (scaled) logistic loss $\frac{1}{\log 2}\log(1+\exp(-\rho_\theta))$, SLiC applies the hinge loss $\max(0,1-\rho_\theta)$, while IPO applies the squared loss $(\rho_\theta-1)^2$. As a result, many popular offline losses can be understood as convex approximations to the 0-1 loss that measures the binary classification accuracy. Any other convex loss alternatives to the above examples provide offline preference optimization losses not in the existing literature, as we show in Table~\ref{table:losses}.}}
    \label{fig:loss}
\end{figure}

Lately, directly aligning AI systems from pairwise comparison datasets has become increasingly common (e.g., \citealp{rafailov2023direct,azar2023general,zhao2023SLiC}), as evidenced by progress in open source models (e.g., \citealp{jiang2024mixtral}). Compared to canonical RL algorithms, such methods are more computationally efficient as they do not require expensive sampling from the models. They also avoid learning reward models altogether, and effectively replace RLHF with a supervised learning problem, which is convenient from various practical perspectives. We refer to such methods as \emph{offline preference optimization}, as they seek to optimize human preferences using offline datasets. Here, \emph{offline} stresses the fact that such datasets are not generated by interactive data collections from the learned model.

Our first contribution is to provide a unifying view over notable existing offline preference optimization algorithms, such as DPO \citep{rafailov2023direct}, IPO \citep{azar2023general} and SLiC \citep{zhao2023SLiC}. To this end, we propose GPO (Generalized Preference Optimization),
%\todom{btw, GPO is also used by https://arxiv.org/pdf/2310.11523.pdf in context of LLM alignment}
which parameterizes preference optimization losses via a family of convex functions $f$, with DPO, IPO, and SLiC as special cases (see Figure~\ref{fig:loss} for a preview of the instantiations). The central insight to our derivation is that one can interpret the problem of reward modeling as a supervised binary classification problem \citep{hastie2009elements}. The rich literature on supervised binary classification paves the way to unifying existing offline preference optimization algorithms, and naturally introduces new algorithms not yet in the current literature. The GPO formulation also helps better understand the algorithmic trade-offs between different variants, particularly, the strength of regularization, which we further dive into.

With a unifying view over offline preference optimization algorithms, our second contribution is to dive into the regularization mechanism induced by offline losses. We see that the tail behavior of the convex function $f$, governs the effective strength of regularization induced between $\pi_\theta$ and $\pi_\text{ref}$, which offers insight on the choice of hyper-parameters such as $\beta$. We identify the offline regularization, computed based on the offline dataset, and show how it generally differs from the KL divergence intended in the initial formulation. Our analysis and empirical results hint at some challenges to enforcing the KL divergence constraints with offline losses, revealing some of the subtleties of the `equivalence' arguments adopted in prior work to derive offline losses (see also Theorem~\ref{thm:equivalence} for a more general version of the equivalence argument). 

The paper is organized as follows:
\begin{itemize}
    \item In Section~\ref{sec:gpo}, we present GPO, generalized policy optimization, which parameterizes offline preference optimization algorithms through a convex function. This recovers a few popular algorithms as special cases and offers insights to offline alignment algorithms in general.
    \item In Section~\ref{sec:derivation}, we expand on the derivation of reward modeling as a binary classification problem. Our insight allows for connecting a rich literature on supervised classification to the designs of offline alignment, which paves the way to the GPO formulation.
    \item In Section~\ref{sec:regularization}, we dive into how offline preference optimization induces regularization between $\pi_\theta$ and $\pi_\text{ref}$ during optimization. We identify an offline regularization loss, the effective regularization that offline algorithms enforce, and show how it differs from the KL divergence through analysis and experimental study. We also show how the design of $f$ introduces different strength of regularization, and how hyper-parameters should be chosen adaptive to $f$.
    \item In Section~\ref{sec:exp}, we start with a controlled setting akin to \citet{gao2023scaling} and show the regularization vs. performance trade-off for different GPO variants. By varying $\beta$ and learning stages during training, the policy performance initially increases followed by decrease, as predicted by the \emph{Goodhart's law}. We observe similar trade-offs across different GPO variants, though the best hyper-parameter can differ significantly due to different inherent strengths of the regularization, as suggested by theory. In a LLM summarization task, we also confirm similar performance across different GPO variants (up to tuning in $\beta$).
\end{itemize}

\section{A general family of offline preference optimization losses} \label{sec:gpo}

In the case of language model alignment, we optimize a policy $\pi_\theta$ that outputs response $y\sim \pi_\theta(\cdot|x)$ given prompt~$x$. Given two responses $y,y'\in\mathcal{Y}$, a human rater provides feedback by picking out the preferred response. This allows relabeling the two responses as $(y_w,y_l)$ corresponding to the win-loss responses. Such pairwise preference data is usually collected offline and can come from a variety of sources in practice, which we denote as a behavior policy $\mu$. Henceforth, when the context is clear we remove the dependency on the prompt $x$ for simplicity.

Importantly, we do not make any assumption on the preference structure $p(y\succ y')$, e.g., it may not come from a Bradley-Terry (BT) model \citep{bradley1952rank}, a common assumption made in prior work \citep{rafailov2023direct}. 
Below, we unify ways to derive various existing offline preference optimization losses for learning from pairwise human feedback.

\subsection{A recipe to derive preference optimization losses}

Assuming access to a reward function $r_\phi$, the regularized policy optimization objective \citep{ouyang2022training} is
\begin{align}
    \max_{\pi_\theta} \mathbb{E}_{y\sim \pi_\theta}\left[r_\phi(y)\right] - \beta \mathbb{KL}\left(\pi_\theta,\pi_\text{ref}\right).\label{eq:po}
\end{align}
To be clear about the KL definition, we have for any two distributions $\pi,\pi'$: $\mathbb{KL}\left(\pi,\pi'\right) \coloneqq \mathbb{E}_{y\sim \pi}\left[\log \frac{\pi(y)}{\pi'(y)}\right]$. The solution to the regularized objective above can be written analytically as $\pi_\theta^\ast(y)\propto \pi_\text{ref}(y)\exp\left(\beta^{-1}r_\phi(y)\right)$.

Given a pair of responses $(y_w,y_l)$, we can train the reward model $r_\phi$ through supervised learning. A convenient class of loss function is defined through the difference $r_\phi(y_w)-r_\phi(y_l)$: we can think of $r_\phi(y_w)-r_\phi(y_l)$ as predicting how likely $y_w$ is preferred to $y_l$. From the discussion above, we see that this difference is equivalent to the log ratio difference of the optimal policy to Eqn~\eqref{eq:po}
\begin{align}
    r_\phi(y_w) - r_\phi(y_l) = \beta \left( \log \frac{\pi_\theta^\ast(y_w)}{\pi_\text{ref}(y_w)} - \log \frac{\pi_\theta^\ast(y_l)}{\pi_\text{ref}(y_l)}\right).\label{eq:reward-logit-equivalence}
\end{align}
Hence intuitively, any loss defined through the reward difference $r_\phi(y_w) - r_\phi(y_l)$
can introduce a loss over $\pi_\theta$. 

A central insight of this work is framing reward learning as a supervised binary classification problem. We leave a more detailed derivation to Section~\ref{sec:derivation}, which provides additional insights. Letting $f:\mathbb{R}\rightarrow\mathbb{R}$ be a scalar function, in general the reward learning loss (to be \emph{minimized}) can be written as
\begin{align}
    \mathbb{E}_{(y_w,y_l)\sim \mu}\left[f\left(r_\phi(y_w)-r_\phi(y_l)\right)\right].\label{eq:rm}
\end{align}

Before moving on, note that the difference $r_\phi(y_w)-r_\phi(y_l)$ is reminiscent of the BT model assumption. However, we argue that it is more sensible to relate this parametric form to the fact that the RLHF formulation (Eqn~\ref{eq:po}) is a maximization problem, and hence imply that each response can be characterized as a single scalar $r_\phi(y)$.  We provide a more detailed discussion in Section~\ref{sec:derivation}.

Many existing offline preference optimization losses can be cast in this general form by replacing the reward difference by the log ratio difference,
\begin{align}
\mathbb{E}_{(y_w,y_l)\sim \mu}\left[f\left(\beta\cdot\left(\log\frac{\pi_\theta(y_w)}{\pi_\text{ref}(y_w)} - \log\frac{\pi_\theta(y_l)}{\pi_\text{ref}(y_l)}\right)\right)\right].\label{eq:offline-loss}
\end{align}

Henceforth, we denote the log ratio difference as $\rho_\theta\coloneqq \log\frac{\pi_\theta(y_w)}{\pi_\text{ref}(y_w)} - \log\frac{\pi_\theta(y_l)}{\pi_\text{ref}(y_l)}$ and the above loss can be rewritten as $\mathbb{E}_{(y_w,y_l)\sim\mu}\left[f\left(\beta\rho_\theta\right)\right]$. A general recipe to derive offline preference optimization losses is to start with a supervised learning loss function $f$ for reward learning, and replace the reward difference by $\rho_\theta$ (see, e.g., \citealp{hastie2009elements} for a nice overview of such loss functions). We can identify the specific functions $f$ for the most common choices; see illustrations of the losses in Figure~\ref{fig:loss} with $\beta=1$.
\begin{itemize}
    \item DPO: $f(\beta\rho_\theta) = -\log \sigma(\beta\rho_\theta)$ with $\sigma$ being the sigmoid function, applies the logistic loss \citep{hastie2009elements}. The loss can also be written as  $\log(1+\exp(-\beta\rho_\theta))$. 
    \item IPO: $f(\beta\rho_\theta) = \left(\beta\rho_\theta-1\right)^2$, the squared function \citep{rosasco2004loss}, can be understood as applying linear regression to the probability that $y_w$ is preferred \citep{hastie2009elements}.
    \item SLiC: $f(\beta\rho_\theta)=\max(0,1-\beta\rho_\theta)$ is the hinge loss function, stemming from the max-margin (support vector machine) paradigm \citep{boser1992training,cortes1995support}. The original SliC algorithm \citep{zhao2023SLiC} also includes a supervised learning component, which we do not discuss here.
\end{itemize}

\begin{table*}
\centering
    \caption{Side-by-side correspondence between existing offline preference optimization losses and convex supervised learning losses. Among a rich variety of convex supervised learning losses developed in the literature, logistic log loss \citep{hastie2009elements}, hinge loss \citep{cortes1995support} and squared loss \citep{rosasco2004loss} have offline preference optimization algorithmic counterparts. Other notable losses, such as the exponential loss \citep{freund1995desicion}, truncated quadratic loss \citep{bartlett2006convexity} and Savage loss \citep{masnadi2008design} can form novel offline preference optimization algorithms. 
    \newline}
\begin{small}
\begin{sc}
 \begin{tabular}{l|c|c}\toprule[1.5pt]
 Supervised learning losses & $f(\beta\rho_\theta)$ & Offline preference  optimization
\\\midrule
Logistic log loss & \bf $\log\left(1+\exp(-\beta\rho_\theta)\right)$ & DPO \citep{rafailov2023direct} \\
Hinge loss & \bf $\max\left(0,1-\beta\rho_\theta\right)$ & SLiC \citep{zhao2023SLiC} \\
squared loss & \bf $\left(\beta\rho_\theta-1\right)^2$ & IPO \citep{azar2023general} \\
Exponential loss & \bf $\exp(-\beta\rho_\theta)$ & N/A \\
Truncated quadratic loss & \bf $\left(\max(0,1-\beta\rho_\theta)\right)^2$ & N/A \\
Savage loss & \bf $1/(1+\exp(\beta\rho_\theta))^2$ & N/A \\
\bottomrule
\end{tabular}
\end{sc}
\end{small}
\vskip -0.1in
\label{table:losses}
\end{table*}

\subsection{GPO: A generalized family of offline preference optimization algorithms}

Building on the discussion above,
in general, any properly defined supervised learning loss $f$ for reward modeling can translate into a preference optimization objective $\mathbb{E}_{(y_w,y_l)\sim\mu}\left[f(\beta\rho_\theta)\right]$. We provide a table of a few notable supervised learning losses developed in the decades-old literature, each loss mapping into an offline preference optimization algorithm.

As discussed above, some of them have already translated into existing methods.
We note a few examples without offline preference optimization counterparts:
\begin{itemize}
    \item Exponential loss: $f(\beta\rho_\theta)=\exp(-\beta\rho_\theta)$, the loss function for the AdaBoost algorithm \citep{freund1995desicion}.
    \item Truncated quadratic: $f(\beta\rho_\theta)=\left(\max\left(0,1-\beta\rho_\theta\right)\right)^2$ \citep{bartlett2006convexity}, a truncated variant of the squared loss, is also a smooth approximation to the hinge loss.
    \item Savage loss: $f(\beta\rho_\theta)=1/\left(1+\exp(\beta\rho_\theta)\right)^2$ \citep{masnadi2008design} which have proved robust to outliers in data and found applications in boosting algorithms.
\end{itemize}
\citet{rosasco2004loss,bartlett2006convexity} give a more exhaustive list of convex supervised learning losses and their discussions.

\begin{figure}[t]
    \centering
    \includegraphics[width=0.48\textwidth]{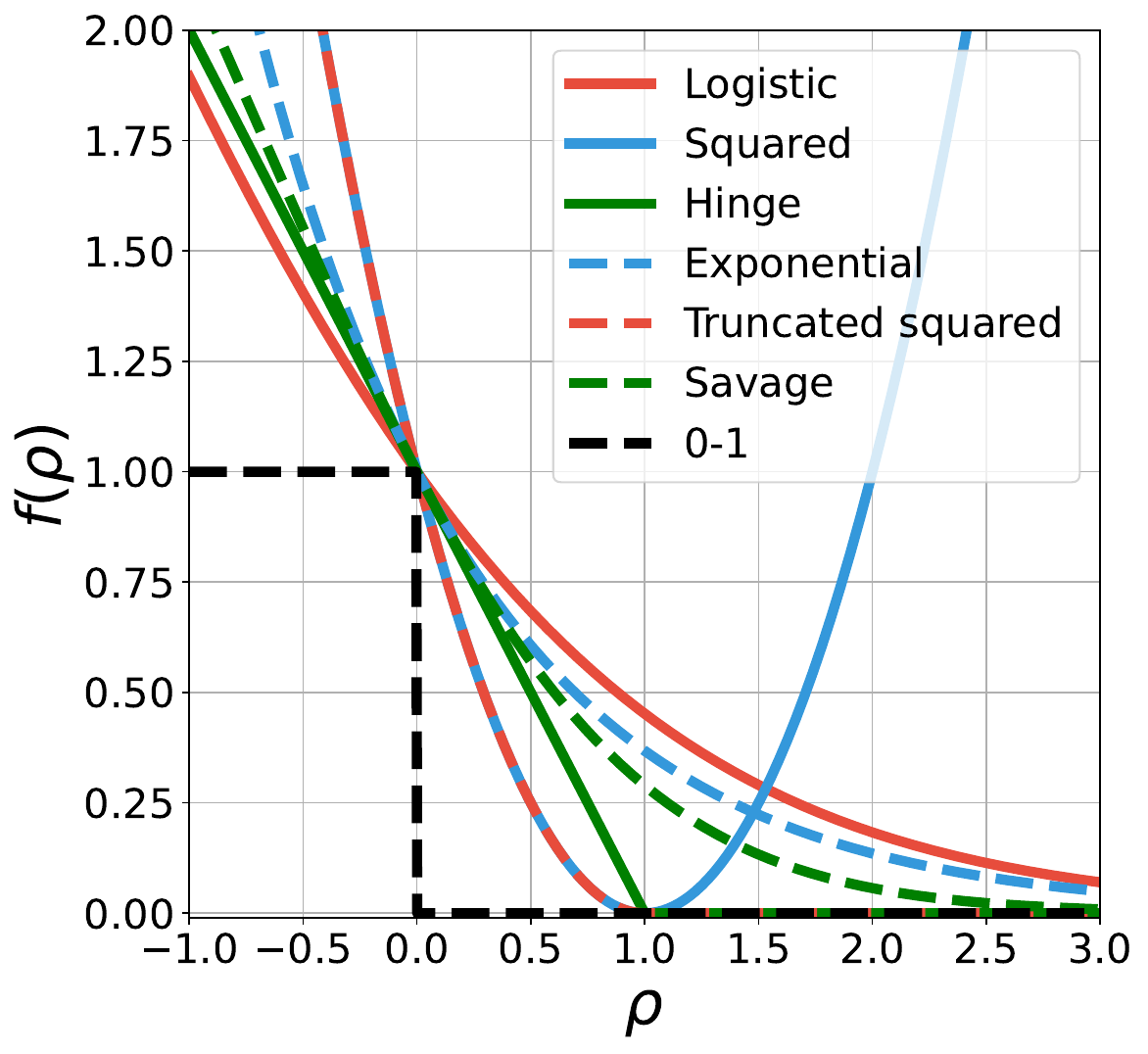}
    \caption{\small{Illustration of notable examples of binary classification loss functions, including both examples (logistic, squared and hinge) that have led to existing offline preference optimization algorithms, as well as others (exponential, truncated squared, Savage) that produce novel losses.}}
    \label{fig:loss-all}
\end{figure}

A key motivating argument for the offline preference optimization algorithms \citep{rafailov2023direct,azar2023general,zhao2023SLiC} is that minimizing the offline losses for the policy $\pi_\theta$ is equivalent to obtaining the optimal regularized policy against a loss minimizing reward model. We can extend the conclusion to this general family of offline preference optimization algorithms.

\begin{restatable}{theorem}{thmequivalence}\label{thm:equivalence}
    (\textbf{Equivalence of optimal solutions}) Let $\pi_\theta^\ast$ be the global minimizer of the offline preference optimization loss in Eqn~\eqref{eq:offline-loss}. $\pi_\theta^\ast$ is the same as the optimal regularized policy (according to Eqn~\eqref{eq:po}) for a reward function that globally minimizes the loss Eqn~\eqref{eq:rm}.
\end{restatable}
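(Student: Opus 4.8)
The plan is to exploit the bijection between (fully expressive) policies and reward functions modulo an additive constant, induced by the Gibbs form of the regularized optimum. Concretely, I would introduce two maps: the forward map sending a reward $r$ to its optimal regularized policy $\pi_r^\ast(y)\propto \pi_\text{ref}(y)\exp(\beta^{-1}r(y))$ (the closed-form solution of Eqn~\eqref{eq:po} already recorded in the text), and the backward map sending a policy $\pi$ to its implicit reward $r_\pi(y)\coloneqq \beta\log(\pi(y)/\pi_\text{ref}(y))$. A short computation shows these are mutually inverse up to a constant shift: $r_{\pi_r^\ast}=r-\beta\log Z$ where $Z$ is the normalizer, while $\pi_{r_\pi}^\ast=\pi$ exactly, since plugging $r_\pi$ into the forward map makes the normalizer equal $1$. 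I would state up front the standing assumptions that make this valid, namely that $\pi_\text{ref}$ has full support and that both the policy class and the reward class are unconstrained (tabular), so every distribution and every reward is representable.

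The key identity I would then establish is that the offline loss of Eqn~\eqref{eq:offline-loss} at $\pi$ equals the reward loss of Eqn~\eqref{eq:rm} at the implicit reward $r_\pi$. This is immediate from Eqn~\eqref{eq:reward-logit-equivalence}: for a policy $\pi$ with implicit reward $r_\pi$, the reward difference $r_\pi(y_w)-r_\pi(y_l)$ equals $\beta$ times the log-ratio difference $\log(\pi(y_w)/\pi_\text{ref}(y_w))-\log(\pi(y_l)/\pi_\text{ref}(y_l))$, so $f$ is evaluated at the same argument in both losses for every sampled pair $(y_w,y_l)$. Crucially, because the reward loss only ever sees reward differences, it is invariant to the additive constant in $r$ and hence well-defined on the equivalence classes picked out by the bijection; this is exactly what reconciles the constant-shift slack in the backward map.

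Finally I would chain these facts. Writing $L_{\text{off}}$ and $L_{\text{rm}}$ for the two losses, the identity gives $L_{\text{rm}}(r)=L_{\text{off}}(\pi_r^\ast)$ for every $r$. Let $r^\ast\coloneqq r_{\pi_\theta^\ast}$ be the implicit reward of the assumed global minimizer $\pi_\theta^\ast$ of $L_{\text{off}}$. For any reward $r$ we then have $L_{\text{rm}}(r)=L_{\text{off}}(\pi_r^\ast)\ge L_{\text{off}}(\pi_\theta^\ast)=L_{\text{rm}}(r^\ast)$, where the last equality uses $\pi_{r^\ast}^\ast=\pi_\theta^\ast$; hence $r^\ast$ globally minimizes $L_{\text{rm}}$. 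Since $\pi_\theta^\ast=\pi_{r^\ast}^\ast$ is by construction the optimal regularized policy for $r^\ast$, the claim follows.

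I expect the only real obstacle to be conceptual rather than computational: making precise and honest the expressiveness assumption on the policy and reward classes, and tracking the additive-constant ambiguity so that ``the'' reward minimizing Eqn~\eqref{eq:rm} is understood as an equivalence class rather than a unique function. Everything else reduces to the two short algebraic verifications that the Gibbs and log-ratio maps invert one another and that the two losses agree under them.
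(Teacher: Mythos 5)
Your proposal is correct and takes essentially the same route as the paper's own proof: both rest on the reparameterization $r(y)=\beta\log\frac{\pi_\theta(y)}{\pi_\text{ref}(y)}+z$ (equivalently, the Gibbs map and its log-ratio inverse), under which the reward loss of Eqn~\eqref{eq:rm} coincides with the offline loss of Eqn~\eqref{eq:offline-loss}, so global minimizers correspond. The only difference is one of rigor, in your favor: the paper asserts this correspondence in three lines, whereas you make explicit the expressiveness assumptions, the additive-constant equivalence classes, and the chain of inequalities that transfers the minimizer from one objective to the other.
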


\section{Reward modeling viewed as a binary classification problem} \label{sec:derivation}

Here, we take a step back and dive into the derivation that converts reward modeling into a supervised binary classification problem. We provide a brief background on the basic setup, and how it relates to reward modeling (see, e.g., \citealp{hastie2009elements} for a more comprehensive introduction).

In binary classification, given a pair of feature and label $(z, l)$ with $z\in\mathbb{R}^k$ and $l \in\{-1,1\}$, the aim is to predict $\hat{\ell}(z)\in\mathbb{R}$ as a function of the feature, and use $\text{sign}\left(\hat{\ell}(z)\right)$ as the classifier, in the hope that it can match the ground truth label $y$. The classification accuracy can be written as $\frac{1}{2}\mathbb{E}\left[\text{sign}\left(\hat{\ell}(z)\cdot l\right)\right] + \frac{1}{2} \in [0,1]$ and an equivalent loss function is
\begin{align}
    \mathbb{E}\left[1-\text{sign}\left(\hat{\ell}(z)\cdot \ell\right)\right].\label{eq:0-1}
\end{align}
The above loss, known as the 0-1 loss (see the dotted dark curve in Figure~\ref{fig:loss}) is non-convex. Instead of directly optimizing it, we can take smooth convex functions $f:\mathbb{R}\rightarrow\mathbb{R}$ and approximate the loss as
\begin{align*}
    \mathbb{E}\left[f\left(\hat{\ell}(z)\cdot \ell\right)\right].
\end{align*}

Taking this back to the case of reward modeling, given a pair of responses $(y_1,y_2)$, we construct a sample for binary classification by setting the label $\ell=1$ if $y_1\succ y_2$ and $\ell=-1$ otherwise. 

Thinking of $(y_1,y_2)$ as the feature from which to make prediction, in general the prediction would be a bi-variate function $\hat{\ell}(y_1,y_2)$ that can depend on both $y_1$ and $y_2$ in an arbitrary form. For a pointwise reward model that depends on a single response $r_\phi:\mathcal{Y}\rightarrow\mathbb{R}$, an intuitive parameterization would be to take the difference of two rewards $\hat{\ell}(y_1,y_2)=r_\phi(y_1)-r_\phi(y_2)$. The corresponding binary classification loss is
\begin{align*}
    \mathbb{E}_{y_1\sim\mu,y_2\sim\mu}\left[\mathbb{I}\left[y_1\succ y_2\right] f\left(r_\phi(y_1)-r_\phi(y_2)\right)\right] + \mathbb{E}_{y_1\sim\mu,y_2\sim\mu}\left[\mathbb{I}\left[y_2\succ y_1\right] f\left(r_\phi(y_2)-r_\phi(y_1)\right)\right].
\end{align*}
Equivalently, we can write the loss as in Eqn~\eqref{eq:rm}
\begin{align*}
    \mathbb{E}_{(y_w,y_l)\sim \mu}\left[f\left(r_\phi(y_w)-r_\phi(y_l)\right)\right].
\end{align*}

The above result offers a number of interesting implications, which we expand on in the next section.

\subsection{Characterizing what the reward model learns}

Drawing inspiration from the supervised learning literature, we can reason about properties of the reward models obtained by minimizing the convex loss function $f$. This can translate into effects on the downstream optimized policies due to the equivalence in Eqn~\eqref{eq:reward-logit-equivalence}. Some discussions are in order below.

\paragraph{The Bradley-Terry assumption and analytic forms of reward models.} 

As alluded to earlier, the design of the reward modeling loss as a function of the reward difference $r_\phi(y_w)-r_\phi(y_l)$ should be interpreted as a result of the reward maximization formulation of RLHF. Implicitly, the maximization formulation assumes that there is a total order on all the responses (i.e., they can be ranked in a monotonic order), which intuitively is captured by the BT assumption to a large extent. Meanwhile when there is no total order, the formulation Eqn~\eqref{eq:po} would not be perfect, and one might need to resort to alternative solution concepts such as Nash equilibrium \citep{munos2023nash,swamy2024minimaximalist}. 

In general, one should train a pairwise preference model $\hat{\ell}(y_1,y_2)=r_\phi(y_1,y_2)$ rather than pointwise reward models, for which there could be characterizations on the properties of the learned model that we discuss below. For pointwise models
the analytic forms are only available in a few special cases drawn from prior work. We discuss two notable examples: (1) the logistic loss, under the assumption that the ground truth preference satisfies a BT model $p(y_1\succ y_2)=\sigma\left(r^\ast(y_1)-r^\ast(y_2)\right)$, then the optimal reward obtained by minimizing Eqn~\eqref{eq:rm} is a constant shift from $r^\ast$ \citep{rafailov2023direct}; (2) For the squared loss, where the optimal reward is a constant away from $p(y\succ \mu)=\mathbb{E}_{y'\sim\mu}\left[p(y\succ y')\right]$ without further assumptions on the ground truth preference. For interested readers, note that the discussion here also provides an alternative way to derive the IPO algorithm distinct from the original derivation in \citet{azar2023general}.

\begin{figure*}[t]
    \centering
    \includegraphics[width=0.95\textwidth]{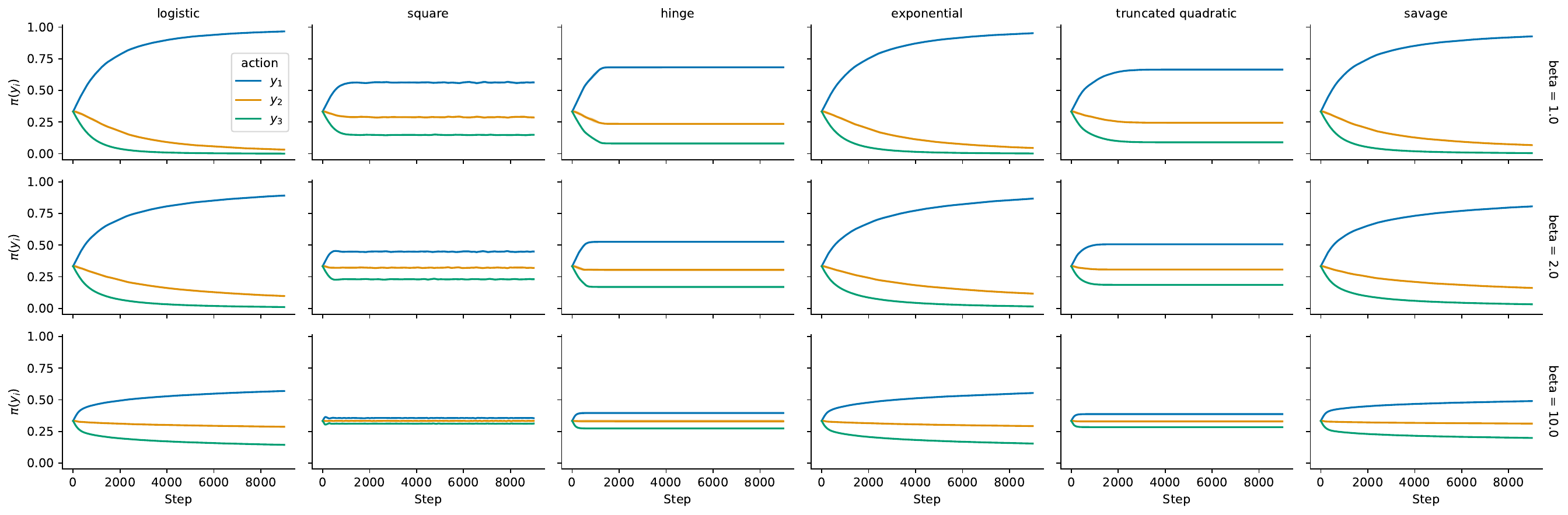}
    \caption{\small{Bandit example \citep{azar2023general} to illustrate the regularization effect of different GPO variants. Convex loss functions with a fast decaying tail or upwards tail (hinge, truncated quadratic and squared loss) will penalize response-level deviations from $\pi_\theta$ to $\pi_\text{ref}$, effectively enforcing a stronger regularization. Other convex losses we exhibit here generally have a slower decaying tail, and will more likely converge to deterministic policies in pathological cases (e.g., deterministic preference).}}
\end{figure*}

\paragraph{A case study of logistic loss vs. hinge loss.} 

Considering the special case when the preferred and non-preferred samples are separable, the hinge loss will find the optimal separating hyperplane that maximizes the margin between the two sets of samples. Drawing inspiration from the classic comparison between logistic regression and support vector machine \citep{hastie2009elements}, we note that the logistic loss will find a similar decision boundary (i.e., sign of the prediction), but it will try to increase the magnitude of the prediction $\hat{\ell}(y_w,y_l)$ to infinity. Such behavior is alluded to in the IPO work \citep{azar2023general} as a failure case of DPO. In general, convex loss functions with a fast-decaying tail (e.g., hinge loss for SLiC) or upwards tail (e.g., squared loss for IPO) will alleviate such issues. In Section~\ref{sec:regularization}, we will illustrate such insights in combination with policy optimization.

\paragraph{General requirement on the convex function $f$.} Not all convex functions $f$ can lead to valid loss functions for binary classification. For our study, we further assume $f'(0)<0$, i.e., $f$ locally decreases at $\rho_\theta=0$. This means that the minimizer of $f$ is obtained at some $\rho_\theta>0$, and intuitively would push the reward difference $r_\phi(y_w)-r_\phi(y_l)$ in the right direction. Intriguingly, this condition is related to Bayes consistency \citep{rosasco2004loss,bartlett2006convexity}, i.e., under which condition can the prediction function $\hat{\ell}(y_1,y_2)$ recover the same sign as the preference probability $\text{sign}\left(2p(y_1\succ y_2)-1\right)$. We provide discussions for interested readers in Appendix~\ref{appendix:bayes}.

\section{Understanding regularization in offline preference optimization} \label{sec:regularization}

In this section, we seek to gain a better understanding of the regularization implicitly enforced by the offline preference optimization algorithms.

Though in general it is challenging to characterize the full learning dynamics of the offline algorithms, we provide analysis from a few angles, which might shed light on how the regularization works. Recall that in the RLHF formulation (Eqn~\ref{eq:po}), the KL regularization is a key element; we will see its connections to the offline regularization.

\subsection{How do offline losses enforce regularization}

As hinted at before, henceforth will we consider the class of convex loss functions that are locally decreasing at $\rho_\theta=0$, i.e., $f'(0)<0$. All the examples in Table~\ref{table:losses} satisfy this property. 

To shed light on how such loss functions entail preference optimization while enforcing regularizers, we consider the Taylor expansion around $\rho_\theta=0$, which is a valid approximation when $\rho_\theta$ is small, i.e., $\pi_\theta$ does not deviate much from $\pi_\text{ref}$.
\begin{align*}
   \underbrace{\mathbb{E}_{(y_w,y_l)\sim \mu}\left[f(\beta\rho_\theta)\right]}_{\text{offline loss}} \approx f(0) + \underbrace{f'(0)\beta\cdot\mathbb{E}_{(y_w,y_l)\sim \mu}\left[\rho_\theta\right]}_{\text{preference optimization}} + \underbrace{\frac{f''(0)\beta^2}{2}\cdot\mathbb{E}_{(y_w,y_l)\sim \mu}\left[\rho_\theta^2\right]}_{\text{offline regularization}}, 
\end{align*}
The expansion implies that when the approximation is valid, the offline algorithms all resemble the case where $f$ is the squared loss (i.e., the IPO loss \citep{azar2023general}). We provide more discussion in Appendix~\ref{appendix:proof}. Minimizing the Taylor-expanded objective achieves two purposes: preference optimization and regularization towards the reference policy. Indeed,  minimizing the first-order term 
\begin{align*}
    f'(0)\beta\cdot\mathbb{E}_{(y_w,y_l)\sim \mu}[\rho_\theta]
\end{align*}
encourages $\pi_\theta$ to place more weight on the preferred response $y_w$ over $y_l$, hence maximizing pairwise human preference. 

To see the effect of the regularization, when $f''(0)>0$ observe that the second-order term 
\begin{align}
    f''(0)\beta^2\cdot \mathbb{E}_{(y_w,y_l)\sim \mu}\left[\frac{1}{2}\rho_\theta^2\right]\label{eq:squared-regularization}
\end{align}
is minimized at $\rho_\theta=0$, in which case $\pi_\theta(y)=\pi_\text{ref}(y)$ for all $y$ in the support of $\mu$. In general, this loss will encourage $\pi_\theta$ to stay close to $\pi_\text{ref}$. We call the above \textbf{$\mu$-weighted squared loss}. Importantly, the global minimizer of the KL divergence between $\pi_\theta$ and $\pi_\text{ref}$ is also a minimizer of the $\mu$-weighted squared loss (i.e., both minimized when $\pi_\theta=\pi_\text{ref}$).

When the approximation is valid, the GPO problem with a regularizer $\beta$ is corresponds to the IPO problem with regularizer $|f''(0)/f'(0)|\cdot\beta$,
and this quantity determines the relative strength of the regularization.
The coefficient $|f''(0)/f'(0)|$ interestingly relates to how convex loss functions are theoretically built-in to be regularized for better generalization \citep{masnadi2015view}. This may inform the design of offline preference optimization algorithms with another theoretical perspective.

\paragraph{Intuition about the full gradient update.} The Taylor expansion is only valid near $\rho_\theta=0$ and except for the special case of squared loss (IPO), drops higher order terms. For example, the expansion does not work natively for SLiC, which employs a non-smooth convex function. Though understanding the full learning dynamics is challenging, we can provide some intuitions about how the full gradient update enforces $\pi_\theta$ to stay close to $\pi_\text{ref}$: consider the gradient update for when $\beta=1$,
\begin{align}
    \theta \leftarrow \theta - \mathbb{E}_{(y_w,y_l)\sim\mu}\left[f'(\rho_\theta)\nabla_\theta \rho_\theta\right].\label{eq:gradient-update}
\end{align}
Starting from $0$, suppose $\rho_\theta$ takes a very high value. This means potentially $\pi_\theta$ places many more weights on certain responses than $\pi_\text{ref}$, which is what the KL divergence regularization seeks to prevent. For the offline update, since $f$ is convex, a few cases are possible: case I: $f'(\rho_\theta)<0$ (for logistic, exponential and Savage loss), $\rho_\theta$ will continue to increase but with a vanishing gradient; hence the regularization is still in place. Meanwhile for case II: $f'(\rho_\theta)\leq 0$ (for hinge, smoothed quadratic and squared loss), $\rho_\theta$ will stop updating or be pushed downwards. As a result, in case  II the gradient update explicitly does not allow $\pi_\theta(y)$ to deviate from $\pi_\text{ref}(y)$ for individual responses $y$, effectively enforcing a stronger regularization with a fixed value of $\beta$.

In Figure~\ref{fig:bandit}, we illustrate the effect of strong regularization using the $3$-action bandit example presented in \citep{azar2023general}, where a simple offline dataset with three pairs of examples are used for training softmax parameterized policies: $(y_1,y_2), (y_2,y_3), (y_1,y_3)$. Examples are uniformly sampled from the distribution. Since $y_1$ is the strongest response, we expect the algorithms to assign high weights to $\pi_\theta(y_1)$, causing deviation from $\pi_\text{ref}$ which is uniform.
The example is meant to illustrate the undesirable behavior of DPO, which tends to push up the probability of $y_1$, despite the intended regularization. See Appendix~\ref{appendix:exp} for more details on the setup. 

We generalize their observations by noting that for any given values of $\beta$, case I losses will keep pushing up the probability of a winning action $y_1$, whereas case II losses enforce the constraint much more conservatively, preventing deterministic policies. In practice where preferences over responses are almost never deterministic, we will see that case I losses are also reasonably well behaved.

\begin{figure}[t]
    \centering
    \includegraphics[width=0.48\textwidth]{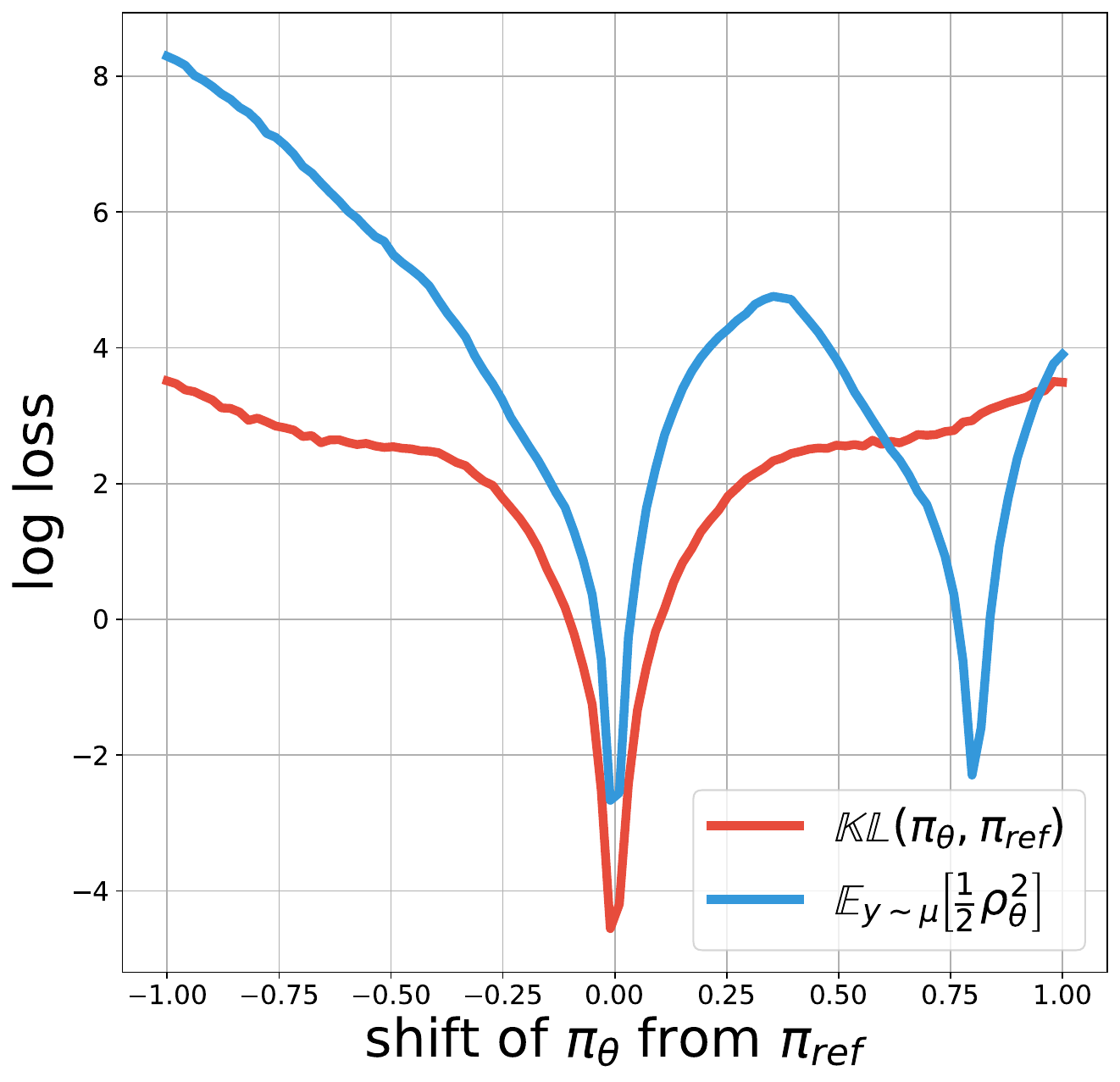}
    \caption{\small{An example of $\mu$-weighted squared loss and KL divergence for mixture of Gaussians. The squared loss has local minimizers different from the KL divergence. This means locally descending on the squared loss may not lead to decreases in the KL divergence, and may not find the global minimizer of the KL divergence. See Appendix~\ref{appendix:exp} for the pdf of $\pi_\text{ref}$ and $\mu$.}}
    \label{fig:gaussians}
\end{figure}

\paragraph{Choosing the right value for $\beta$.} if we understand the tail behavior of the convex function as determining the \emph{natural} regularization strength of the offline algorithm, the hyper-parameter $\beta$ needs to chosen accordingly, if one desires a fixed level of regularization. For example, the logistic loss (i.e., DPO) requires a higher value of $\beta$ to enforce the same level of regularization as the squared loss (i.e., IPO) and the hinge loss (i.e., SLiC), as also exemplified in Figure~\ref{fig:bandit}.

\subsection{Offline regularization vs. KL regularization}

Henceforth we will resort back to the offline regularization: $\mu$-weighted squared loss, and understand its difference against the KL divergence regularization. We start with the gradient of the $\mu$-weighted squared loss 
\begin{align*}
    \mathbb{E}_{y\sim \mu}\left[\nabla_\theta\frac{1}{2}\rho_\theta^2\right]
\end{align*}
which seeks to decrease the squared error that measures the discrepancy between $\pi_\theta$ and $\pi_\text{ref}$, at samples generated by $\mu$. For the KL divergence, we can show that its gradient is equivalent to the $\mu$-weighted squared loss with  $\mu=\pi_\theta$
\begin{align}
\nabla_\theta \mathbb{KL}(\pi_\theta,\pi_\text{ref})= \mathbb{E}_{\color{blue}y\sim \pi_\theta}\left[\nabla_\theta \frac{1}{2}\rho_\theta^2\right].\label{eq:kl-mu}
\end{align}
In other words, we can understand the gradient to the KL divergence as minimizing the discrepancy with \emph{on-policy} samples under $\pi_\theta$, rather than offline samples from $\mu$. We detail the derivation in Appendix~\ref{appendix:proof}; note a highly similar result was also derived in \citep{richter2020vargrad}.

In summary, both losses enforce the squared penalty on samples from $\mu$ vs. online samples from $\pi_\theta$. We can envision cases when the $\mu$-weighted squared loss is being minimized, the KL divergence might not decrease as desired.

\paragraph{A mixture of Gaussians counterexample.} To show the fact that, during minimization of the squared loss, we may not necessarily observe global minimization of the KL divergence, we provide a low-dimensional toy counterexample using mixture of Gaussians. We set up an example where both $\pi_\text{ref}$ and $\mu$ are mixtures of three Gaussians. The optimized policy $\pi_\theta$ is just a constant shift away from $\pi_\text{ref}$ with the shift being parameterized by a trainable parameter $c$. When $c=0$, we have $\pi_\text{ref}=\pi_\theta$ and both the squared loss and KL divergence are minimized to $0$.

In Figure~\ref{fig:gaussians}, we show the KL divergence and the $\mu$-weighted squared loss, both in log scales, as a function of $c\in[-1,1]$. The squared loss has a few minima, with some of them being remote from $c=0$. This means gradient descent on the squared loss may not lead to smaller KL in general, though they are both globally minimized at $\pi_\theta=\pi_\text{ref}$ for $c=0$. See Appendix~\ref{appendix:exp} for the plot of the pdfs of $\mu$ and $\pi$.

This example is meant to illustrate that the arguments used in prior work on offline preference optimization \citep{rafailov2023direct}, which heavily rely on the global minimization of objectives, may not always be true in practice: locally minimizing the $\mu$-weighted squared loss might not lead to decrease in the KL divergence. However, the silver lining is that near $\pi_\theta=\pi_\text{ref}$, the two losses are highly correlated; we will validate the observations on such a low-dimensional example with a language modeling study.

\begin{figure*}[t]
    \centering
    \includegraphics[width=0.95\textwidth]{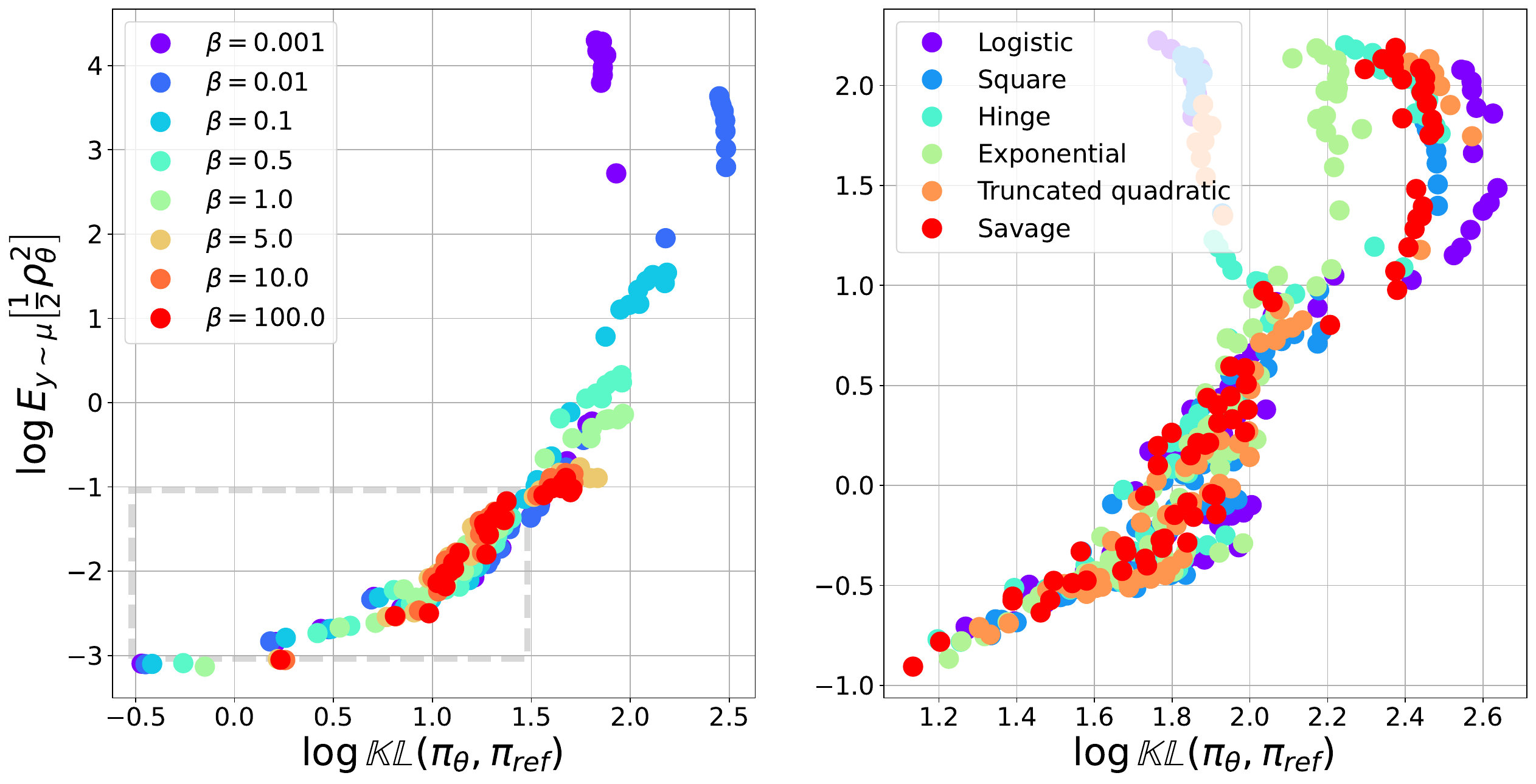}
    \caption{\small{Tracing out KL divergence vs. $\mu$-weighted squared loss during offline preference optimization. (Left) With $f$ being the squared function, we show the trajectories for a range of $\beta$s. Importantly, the initial data point for which $\pi_\theta=\pi_\text{ref}$ is dropped for better visualization, see Appendix~\ref{appendix:exp} for the complete plot. Note that as $\beta$ increases, the algorithm maintains a better constraint on the $\mu$-weighted squared loss, which also induces a constraint on the KL divergence. (Right) We pool over different $\beta$s and show trajectories for different GPO variants. See Appendix~\ref{appendix:exp} for individual plots for each variant. Overall, all algorithmic variants enjoy similar constraint properties, with most variants being slightly more stable than the logistic variant.}}
    \label{fig:gpo-all}
\end{figure*}

\subsection{Analyzing a language modeling example}

In the case of language modeling, where $\pi_\theta,\pi_\text{ref},\mu$ are sequential categorical distributions, we measure the correlation between the KL divergence $\mathbb{KL}\left(\pi_\theta,\pi_\text{ref}\right)$ and the $\mu$-weighted squared loss $\mathbb{E}_{y\sim\mu}\left[\frac{1}{2}\rho_\theta^2\right]$ during offline training. We consider the summarization task similar to \citep{roit2023factually}, where the offline dataset is an open source summarization dataset collected with human feedback labels \citep{stiennon2020learning}. We give more details in Appendix~\ref{appendix:exp}.

For each experiment, we choose a fixed value of regularization $\beta$. Then, we initialize $\pi_\theta$ from $\pi_\text{ref}$ and minimize the offline preference losses over the dataset. As the training progresses, we record sample-based estimates of the KL divergence and $\mu$-weighted squared loss over time, and trace them out in Figure~\ref{fig:gpo-all} left plot for when $f$ is a squared function.  We show both loss functions in the log scale. 

Importantly, we have dropped from the plot the initial data point for which $\pi_\theta=\pi_\text{ref}$ and both losses are zero, otherwise the whole plot will look unbalanced (since $\log 0\approx -\inf$). See the full plot in Appendix~\ref{appendix:exp}. We make a few comments regarding the current plot.

\paragraph{Correlation between the two losses.} There appears to be two phases in Figure~\ref{fig:gpo-all} left plot. When $\beta$ is large, and when the $\mu$-weighted squared loss is maintained at a lower level, we see a better correlation between the two losses. Meanwhile, when $\beta$ is small and the $\mu$-weighted squared loss grows quickly during optimization, its correlation with KL divergence becomes more elusive (see purple and blue data points on the left plot). Such observations echo the mixture of Gaussian examples, where in the vicinity of $\pi_\theta=\pi_\text{ref}$, the two losses have similar trends; the misalignment happens when we deviate too much from the origin.

Though the correlation between the two losses seem to break when $\pi_\theta$ is too far away from $\pi_\text{ref}$, the silver lining is that for offline algorithms, the optimization always starts with the origin $\pi_\theta=\pi_\text{ref}$, and one may expect a better control over the KL divergence through the $\mu$-weighted squared loss.

\paragraph{More variations in KL compared to $\mu$-weighted loss.}
For Figure~\ref{fig:gpo-all} left plot, in the regime where the KL divergence and $\mu$-weighted squared loss are better correlated (areas inside the grey bounding box), we see an order of magnitude more drastic variations in the KL divergence ($10^{-0.5}\rightarrow 10^{1.5}$) than the $\mu$-weighted squared loss ($10^{-1.5} \rightarrow 10^{0.5}$). 

This hints at the challenge of maintaining the KL divergence constraint by controlling the $\mu$-weighted squared loss. Indeed, since the offline preference optimization algorithms directly optimize for the $\mu$-weighted squared loss in the vicinity of the origin $\pi_\theta=\pi_\text{ref}$, even small changes in the $\mu$-weighted squared loss can induce much bigger changes in the KL divergence. This might become a source of instability during optimization. However, the degree to which such instability can be mitigated by other hyper-parameter choices such as learning rate, might vary case-by-case.

\paragraph{Comparison across different GPO variants.} In Figure~\ref{fig:gpo-all} right plot we compare the constraint contours across different GPO variants listed in Table~\ref{table:losses}. For each variant we sweep the $\beta$s but for visualization we pool across results from all $\beta$s, see Appendix~\ref{appendix:exp} for individual plots. 

Overall, different variants follow a similar pattern, with most variants being slightly more robust compared to the logistic loss, which seems to induce slightly bigger variations in the KL divergence compared to other alternatives.

\section{Empirical study of GPO variants} \label{sec:exp}

We now carry out a set of experimental comparison between different GPO algorithms, and to study their empirical behavior and validate theoretical insights.

\begin{figure*}[t]
    \centering
    \includegraphics[width=0.95\textwidth]{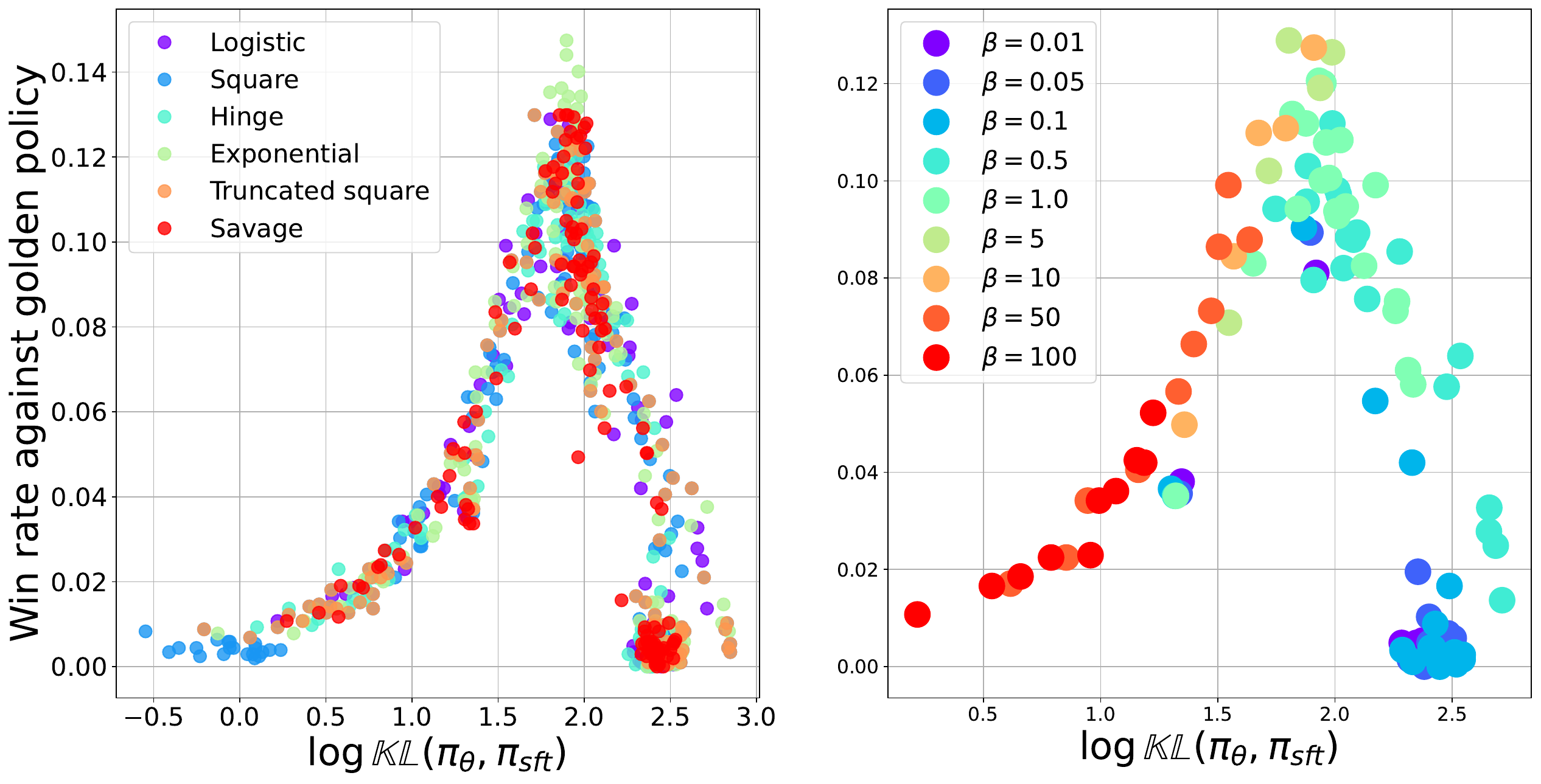}
    \caption{Left: Tracing KL divergence vs. golden win rate performance for different GPO variants. Each data point corresponds to a policy obtained during training with a particular value of $\beta$ and convex function loss. For each loss variant, we pool data points across $\beta$s and different stages of training. Overall, the trade-off curves of GPO variants look similar. Right: Tracing the trade-off for the logistic loss (DPO), grouped according to the regularization coefficient $\beta$. As $\beta$ increases, the regularization effect is larger and during training, and the policies tend to have smaller KL divergence against $\pi_\text{sft}$.}
    \label{fig:goodhart-all}
\end{figure*}

\subsection{Trade-offs between KL divergence and performance}

As the offline alignment optimization progresses, the policy $\pi_\theta$ starts to drift away from the initial anchor policy $\pi_\text{sft}$. When measured in terms of the \emph{ground truth} performance, there is a trade-off between model performance and KL divergence from the initialization. We adopt a synthetic setting similar to \citep{gao2023scaling} to study this trade-off. 

Concretely, we take the summarization task introduced above and train a XXL model (11 billion parameters) as the golden preference model, using similar training setting as \citet{munos2023nash}. This preference model will be used as the \emph{golden} judgement. Since the preference model carries out side by side comparison, we also train a golden policy as the fixed baseline to compare against. We provide more technical details in Appendix~\ref{appendix:exp}. For each fixed convex loss function, we sweep over values of the regularization coefficient $\beta$. For each $\beta$, we train the model for $2\cdot 10^4$ steps with a constant learning rate ($10^{-5}$ and $3\cdot 10^{-5}$). We evaluate checkpoints every $2k$ steps for a total of $20k$ training steps.

In Figure~\ref{fig:goodhart-all} (left), we trace the performance of trained checkpoints over time, plotting their golden evaluation performance against the golden policy. Each dot corresponds to a checkpoint evaluation, for a particular value of $\beta$, learning rate and convex function loss. We group the results by the convex function loss. A few observations are in order: (1) We observe the over-optimization effect compatible with \emph{Goodhart's law} \citet{gao2023scaling}, wherein as the KL divergence increases, the golden performance evaluation first increases and then decreases as a result of over-optimization. The key difference is that \citep{gao2023scaling} is for online RLHF, while our case is offline optimization; (2) For different loss functions, the overall trade-off curves look similar. Concretely, the peak performance is similar and is obtained at a similar level of KL divergence. This suggests that for any choice of the convex loss function, a choice of $\beta$ and training step can lead to a specified level of performance. 

In Figure~\ref{fig:goodhart-all} (right), we break down the trade-off curve with respect to the regularization coefficient $\beta$. We show the case for the logistic loss, though other losses have a similar breakdown (see Appendix~\ref{appendix:exp} for full results). For each $\beta$ (with a unique color), different data points correspond to different stage of training for the same experiment and hence tracing out a trend of KL divergence vs. win rate. We make a few observations: (1) Data points seem to piece together seamlessly at the \emph{soft} boundaries between $\beta$s, this means given a fixed value of $\beta$, one can probably obtain a specified level of KL divergence and win rate performance, by training the policy for a certain number of steps. However, different $\beta$s are not equal: in the case of logistic loss, $\beta \sim 1$ seems to obtain the best overall performance across training, while $\beta=0.01$ can easily train the policy to have large KL divergence, resulting in degraded performance; meanwhile, $\beta=100$ puts a larger constrain the policy near $\pi_\text{sft}$, making it difficult to obtain the best performance across training.

\begin{figure*}
    \centering
    \includegraphics[width=0.95\textwidth]{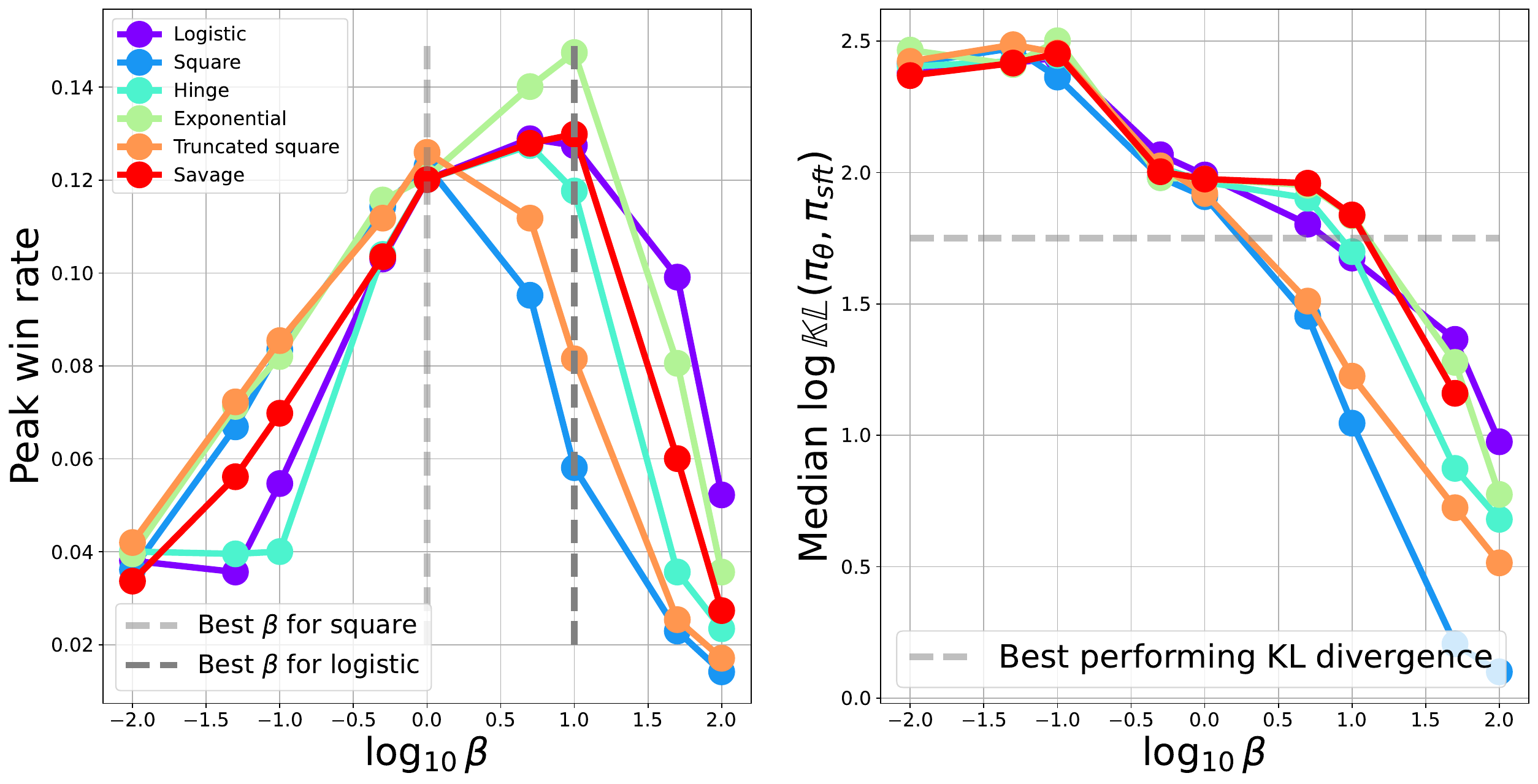}
    \caption{Left: $90\%$-th percentile performance during training for different values of $\beta$s. We use the $90\%$-th percentile as an estimate of the best possible performance under a fixed $\beta$. Different GPO variants seem to peak at different values of $\beta$: noticeably, squared loss and truncated squared loss peak at about $\beta=1$ while others mostly peak at slightly larger values $\beta\sim 10$. Right: Median values of KL divergence during training, as a function of $\beta$ for different GPO variants. When $\beta$ is small, different variants have little distinction; when $\beta$ is large (strong regularization) and fixed, squared and truncated squared loss tend to incur smaller KL divergence compared to other variants.}
    \label{fig:goodhart-betas}
\end{figure*}

\paragraph{Impact of $\beta$.} We now closely investigate the impact that $\beta$ has on the performance and KL regularization dynamics of various GPO variants. Figure~\ref{fig:goodhart-betas} (left) shows the peak performance of various algorithms as a function of $\beta$. As seen from the plot, the peak performance of squared and truncated squared loss is obtained at generally lower $\beta\sim 1$, whereas the peak performance for other variants are obtained at higher $\beta\sim 10$. There is some variations of the peak win rate (e.g., exponential seems to be slightly better than others) but this might not be statistically significant.

While the observation suggests the fact that different algorithms require different values of $\beta$s to perform the best, it can be explained by the fact that different loss functions induce distinct strengths of regularization as a function of $\beta$, as predicted by theory. In Figure~\ref{fig:goodhart-betas} (right) we show the median KL divergence during training as a function of $\beta$, for different convex loss functions. When $\beta$ is small and regularization is weak, there is little distinction between different variants. This is compatible with the results in Figure~\ref{fig:gpo-all}: the offline algorithm enforces regularization through the weighted squared loss, and its correlation with KL divergence is weak when the regularization is small. At large values of $\beta$s, the correlation between offline regularization and KL divergence is much stronger. And indeed, we see squared and truncated squared loss enforce stronger regularization than other variants, with logistic, exponential and Savage being in the same league and hinge loss in the middle.

\subsection{Model-based side by side evaluation}

The synthetic setting has provided many insights into the trade-offs between regularization and policy performance, and how they are modulated by choices of $\beta$ and convex loss functions. We now carry out a final set of experiments on the summarization task, using settings described in prior work \citep{munos2023nash,calandriello2024human}.

We consider the side-by-side comparison metric used by \citet{munos2023nash}, where we compare the checkpoint performance against a fixed opponent $\pi_\text{ref}$. The comparison is made by a prompted PaLM-2 model \citep{anil2023palm} over an evaluation set of $2000$ summary samples. The prompted model judges which response is of higher quality. See Appendix~\ref{appendix:exp} for evaluation details.

Examining the performance across $\beta$s, we see that when $\beta$ is small, the optimization tends to be more effective, achieving the best performance at about $\beta\in[0.1,1]$ across the board, with similar peak performance. The performance experiences a bigger drop when $\beta$ becomes large. When making pairwise comparison across different GPO variants, we see that their performance is generally on par with one another; choosing the right $\beta$ appears more critical. Due to space limits, we present these comparisons in Appendix~\ref{appendix:exp}.

\section{Discussions and conclusion}

We have presented GPO, a generalized approach to deriving offline preference optimization losses for LLM alignment. GPO presents a continuous spectrum of loss functions, encompassing DPO, IPO and SLiC as special instances. By deriving GPO through the rich literature on binary classification, we have presented a more unified way to reason about the strength of regularization and what the optimized policy seeks to capture.

We have shown the connections between the offline regularization and the KL regularization, which the RLHF formulation seeks to enforce. The two types of regularization are different in general. However, optimizing from the origin, we see empirical evidence that the two losses are correlated, alluding to the fact that enforcing KL divergence through offline optimization is possible though maybe more challenging. 

We have also showed the regularization vs. performance trade-off between different GPO variants. Overall, the regularization vs. performance trade-off is similar for different algorithms. As predicted by theory, different convex loss variants induce inherently distinct strengths for regularization, which impacts the optimal value of $\beta$ for each algorithm (i.e., squared loss needs a smaller $\beta$ than logistic loss).

Our results have a number of limitations and provide avenues for future work. Our framework is based on the reward maximization formulation of RLHF, and hence still encounters theoretical issues when the ground truth preference structure is complex. A future direction would be to connect GPO with alternative solution concepts for alignment such as Nash equilibrium \citep{munos2023nash}. Our framework also only deals with offline losses with a contrastive form, and does not handle supervised learning based losses \citep{zhao2023SLiC}.

\paragraph{Acknowledgements.} We thank Ivo Danihelka for providing very valuable feedback to an earlier draft of the paper. We are thankful to the Google DeepMind teams that build the infrastructure which facilitates the research in this work.

\bibliographystyle{plainnat}
\bibliography{main}

\newpage
\appendix

\section{Experiment details and additional results}\label{appendix:exp}
We provide further details and additional results on experiments across the paper.

\subsection{Bandit experiment}

To illustrate the regularization properties of various GPO variants, we have employed the bandit experiment introduced in \citep{azar2023general}. We consider a $3$-action bandit problem where the dataset consists of three possibilities 
\begin{align*}
    (y_1,y_2), (y_2,y_3), (y_1,y_3).
\end{align*}
Sampling from the offline dataset consists in uniformly sampling from the pairs. We then train softmax parameterized policies with exactly the same setup as \citep{azar2023general}. Note that with logistic, exponential and Savage loss, because the tail does not vanish fast enough, the policy converges to the greedy action $y_1$ even with regularization at $\beta=1$. While for the other three losses, thanks to stronger regularization, $\pi_\theta(y_1)$ maintains closer distance to $\pi_\text{ref}(y_1)$.

\subsection{A mixture of Gaussian counterexample}

We find the counterexample by parameterizing all related distributions as mixtures of Gaussians with $3$ modes. It is not difficult to construct numerical counterexamples as shown in the paper, with  $\leq 5$ simulations.

The offline distribution $\mu$ is parameterized as $\mu=\frac{1}{3}\mathcal{N}(u_1,0.05^2)+\frac{1}{3}\mathcal{N}(u_2,0.05^2)+\frac{1}{3}\mathcal{N}(u_3,0.05^2)$ where $u_1,u_2,u_3$ are i.i.d. uniform between $-1$ and $1$. The reference policy is fixed as $\pi_\text{ref}=\frac{3}{10}\mathcal{N}(-0.8,0.1^2)+\frac{4}{10}\mathcal{N}(0,0.1^2)+\frac{3}{10}\mathcal{N}(0.8,0.1^2)$. The optimized policy $\pi_\theta$ is a constant shift away from $\pi_\text{ref}$. The particular choice of the parameters are fairly ad-hoc and other choices of hyper-parameters should lead to clear counterexamples as well. Since for mixtures of Gaussians, both the KL divergence and the $\mu$-weighted squared loss do not yield analytic forms. Instead, we draw $2000$ samples to estimate both losses as unbiased estimates.

Figure~\ref{fig:gaussians-all} (left) shows the probability density function (pdf) for $\pi_\text{ref}$ and $\mu$, in the counterexample that we presented.

\begin{figure*}
    \centering
    \includegraphics[width=0.95\textwidth]{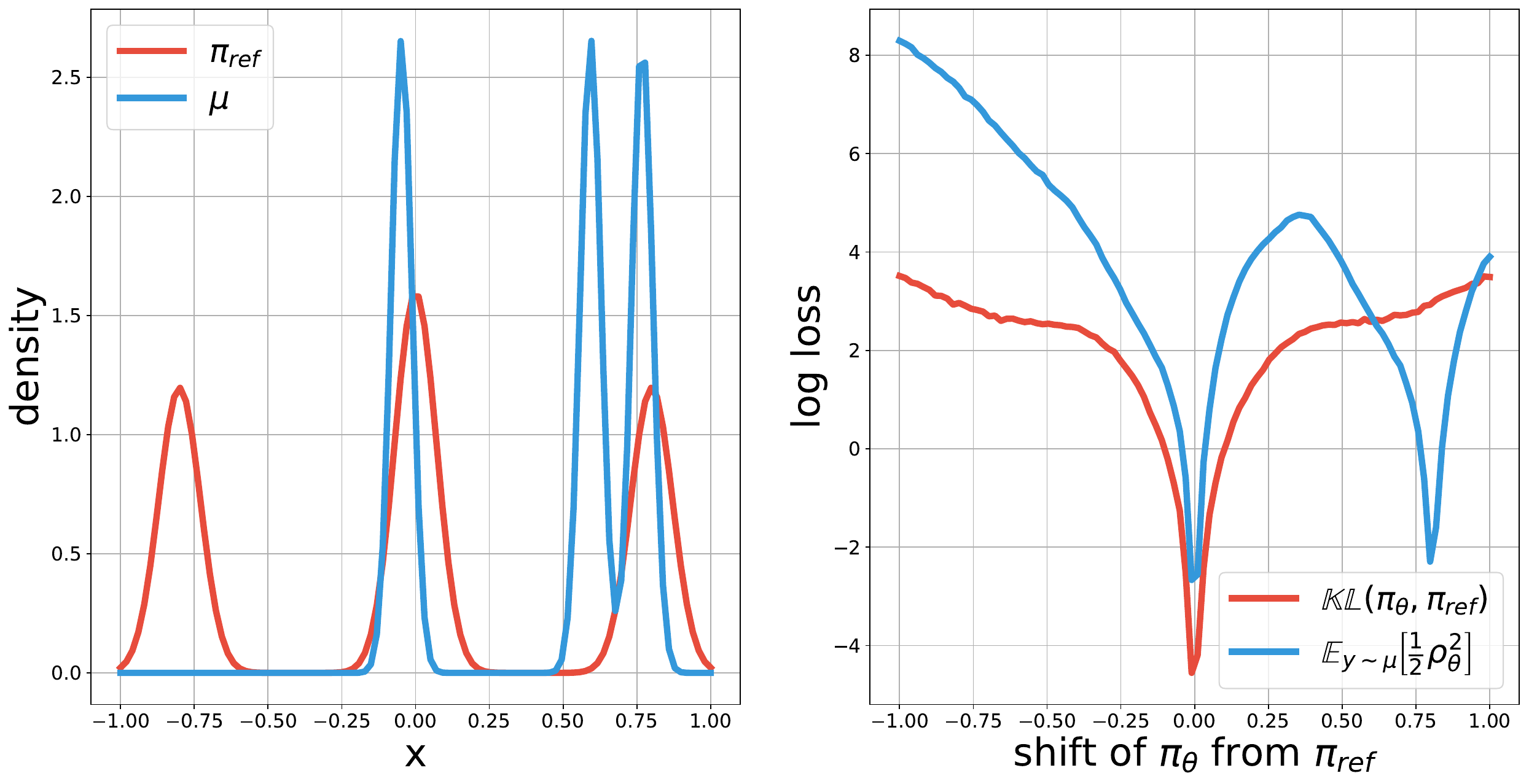}
    \caption{Full results for the mixture of Gaussian counterexample. (Left) The probability density function (pdf) for $\mu$ and $\pi_\text{ref}$, both are designed to be mixtures of Gaussian with $3$ modes; (Right) The same plot as Figure~\ref{fig:gaussians}.}
    \label{fig:gaussians-all}
\end{figure*}

\subsection{Language modeling experiments}

We consider the summarization task similar to \citep{roit2023factually}, where the offline dataset is an open source summarization dataset collected with human feedback labels \citep{stiennon2020learning}. The base model is T5X \citep{roberts2023scaling}, a family of LLMs based on encoder-decoder transformer architecture. Throughout, we train large-sized models with $700M$ parameters. During training, we apply a constant learning rate of $10^{-5}$ with batch size $b=32$. We use the Adafactor optimizer \citep{shazeer2018adafactor} with a decay rate of $0.8$. Each model is trained for $2\times 10^5$ steps in total.

The evaluation follows from \citep{munos2023nash} where they consider the side-by-side comparison metric between two models. A default baseline model is the supervised fine-tuned baseline $\pi_\text{ref}$. The comparison is made by a prompted PALM-2 model \citep{anil2023palm}, where the model judges which response is of higher quality. The evaluation set consists of $2000$ examples, each containing a paragraph to summarize. The prompted model is given the paragraph, as well as the two summaries generated by the two compared models, to deliver a final verdict.

\paragraph{Tracing KL divergence and $\mu$-weighted squared loss.} For each experiment (with a fixed convex function $f$ and fixed $\beta$), we evaluate intermittently the $\mu$-weighted squared loss on the learner and the KL divergence on the evaluator. The evaluator is carried out every $2000$ steps where we train for a total of $20000$ steps. We also evaluate every $200$ steps for the first $2000$ steps since the initial stage during training presents the most salient changes in the $\mu$-weighted squared losses.

The tracing plot for individual GPO variant is shown in Figure~\ref{fig:gpo-individuals} for better visualization.

\subsection{Trade-off between performance and KL divergence}

All experiments are carried out with T5X models \citep{raffel2020exploring} with the T5X data and compute framework \citep{roberts2023scaling}. To create a synthetic setup similar to \citet{gao2023scaling}, we take the summarization dataset and train a golden preference model with the XXL model (11 billion parameters). Then we use the XXL model to relabel the offline dataset, and all offline experiments going forward are carried out with this relabeled dataset. 

Since the preference model requires side by side comparison, we also train a golden policy using online IPO \citep{calandriello2024human} using the golden preference model. This policy is denoted \emph{golden} because it makes use of the golden preference model during training, and should arguably obtain the best possible performance over time. We use this policy as the reference policy during evaluation.

All policies are trained with the Large T5X model (110 million parameters) using offline preference optimization variants outlined in the paper.

\paragraph{Full results on the breakdown of KL divergence vs. win rate.} Figure~\ref{fig:goodhart-all-betas} shows the win rate performance and KL divergence trade-off curves across different algorithmic variants of GPO. For each algorithmic variant, the data points are grouped by the regularization coefficient $\beta$. Overall, different algorithmic variants exhibit trade-off pattern and their dependency on $\beta$ is similar too.

It is worth noting that compatible with results reported in Figure~\ref{fig:goodhart-betas}, all algorithmic variants achieve the peak performance at the same value of KL divergence but with a different value of $\beta$. This is the result of the fact that different loss functions have different natural strength of regularization.

\begin{figure*}
    \centering
    \includegraphics[width=0.95\textwidth]{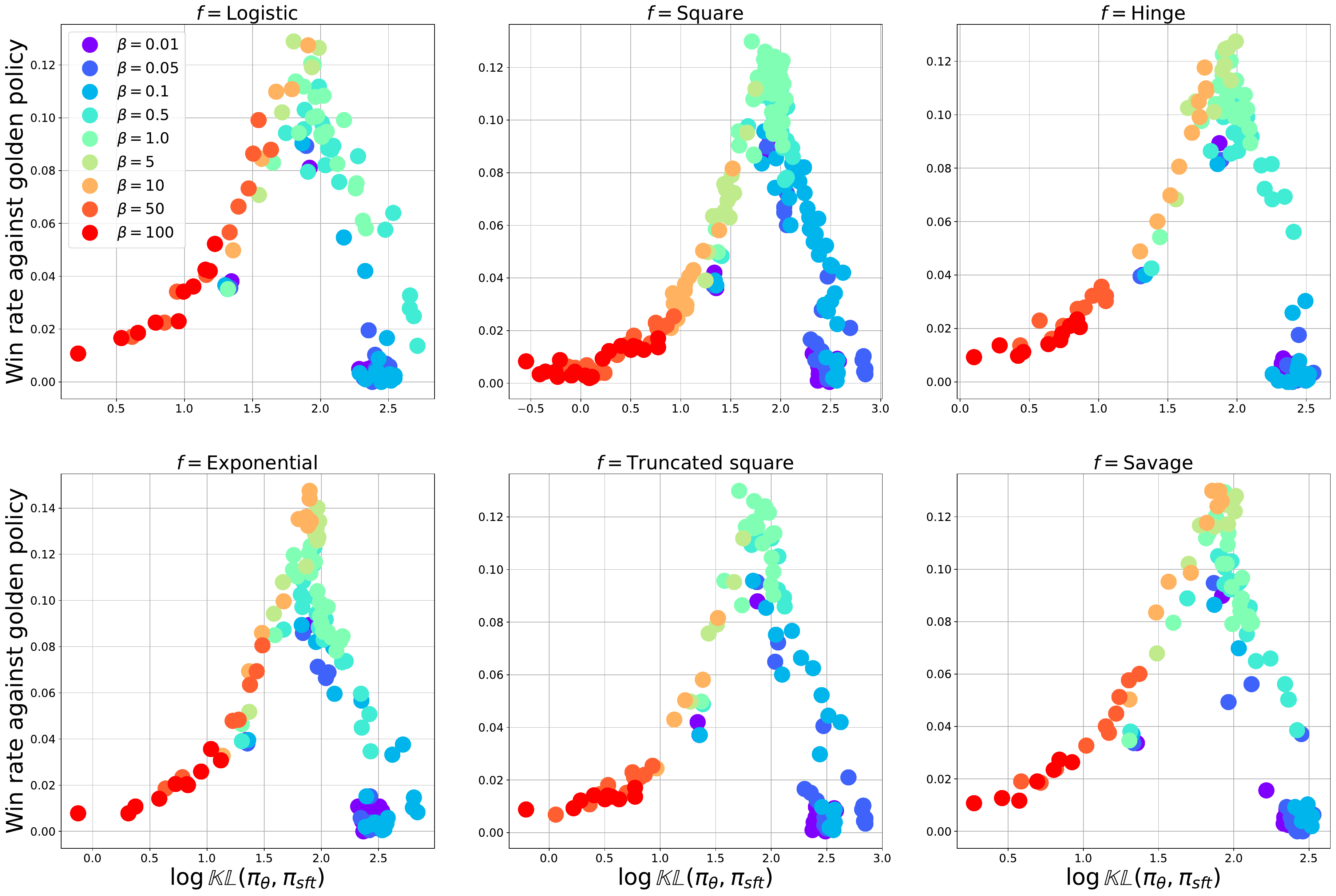}
    \caption{Tracing the trade-off between performance and KL divergence for the various loss functions. For each loss function, the data points are grouped according to the regularization coefficient $\beta$. We see that different algorithmic variants exhibit similar patterns both in terms of the general trade-off curves, as well as the dependency of the curves on $\beta$.}
    \label{fig:goodhart-all-betas}
\end{figure*}

\paragraph{Side by side evaluation.} We subsample $256$ prompts from the training set and generate responses from both the golden policy and the target policy to compare against. We then use the preference model to judge the win rate between the two sets of responses, and average across the subsampled prompt set.

\subsection{Model-based side by side evaluation}

We now discuss experimental results on the summarization task with model-based side by side evaluation. While previous study on the KL divergence vs. win rate trade-off is carried out in a synthetic setting, here we train models with the open sourced summarization dataset \citep{stiennon2020learning} and prompt a PALM-2 model \citep{anil2023palm} for side by side evaluation. We adopt identical evaluation setup as in \citep{munos2023nash} and \citep{calandriello2024human}.

\paragraph{Win rate results.} In Figure~\ref{fig:win-rate}, we show the win rate of various algorithmic variants in a side-by-side comparison against the supervised fine-tuned checkpoint $\pi_\text{ref}$. For two identical models, the win rate should be $0.5$. We observe that the best performance is usually obtained at $\beta\in [0.1,1]$, with similar performance across different $f$s. Interestingly, when $\beta$ becomes too large, the win rate drops more quickly across all methods.

In Figure~\ref{fig:win-rate-sxs}, we show the side by side comparison across GPO variants. For each variant, we take the checkpoint with $\beta=0.1$ since this appears to be a value where all algorithms work reasonably, according to the win rate against the supervised fine-tuned checkpoint. The win rate comparison across GPO variants suggests that they perform mostly similar.

\begin{figure}[t]
    \centering
    \includegraphics[width=0.95\textwidth]{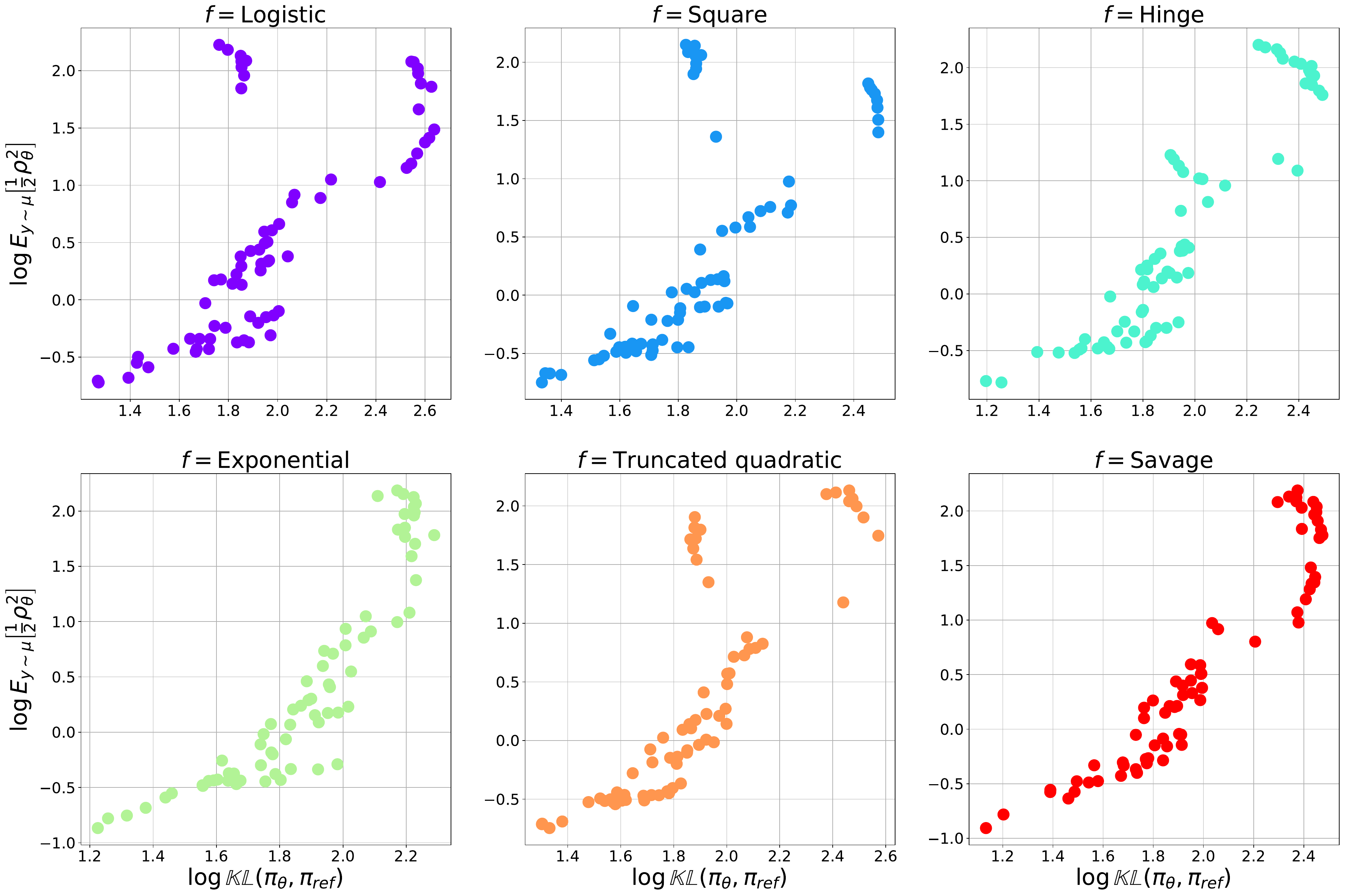}
    \caption{Tracing out KL divergence vs. $\mu$-weighted squared loss during offline preference optimization, for individual GPO variants. This plot separates the data from Figure~\ref{fig:gpo-all} for better visualization.}
    \label{fig:gpo-individuals}
\end{figure}

\begin{figure}[t]
    \centering
    \includegraphics[width=0.48\textwidth]{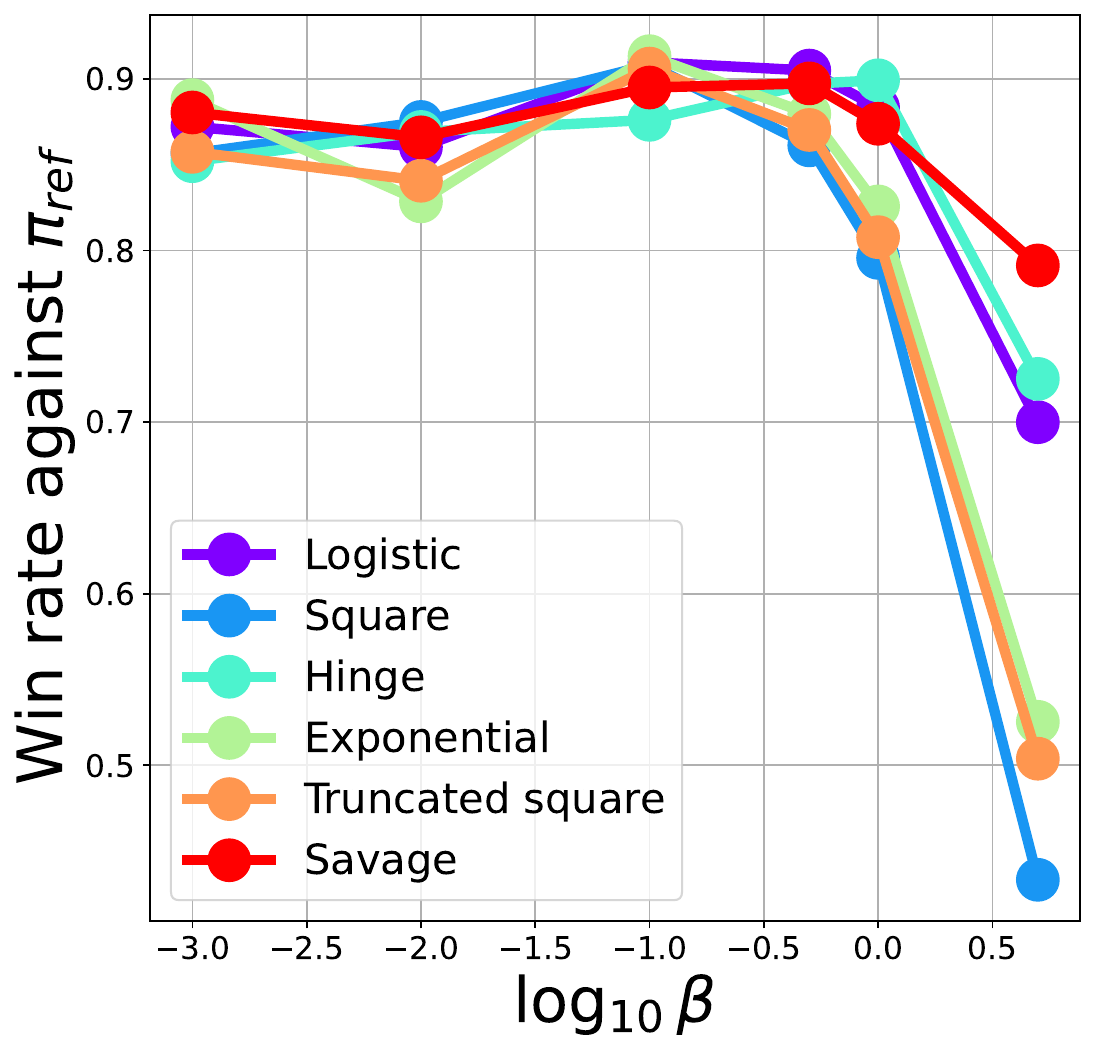}
    \caption{Win rate of various GPO methods against the supervised fine-tuned baseline $\pi_\text{ref}$, as a function of $\beta$. Almost all algorithmic variants obtain the best performance at $\beta\in[0.1,1]$, with similar peak performance.}
    \label{fig:win-rate}
\end{figure}

\begin{figure}[t]
    \centering
    \includegraphics[width=0.48\textwidth]{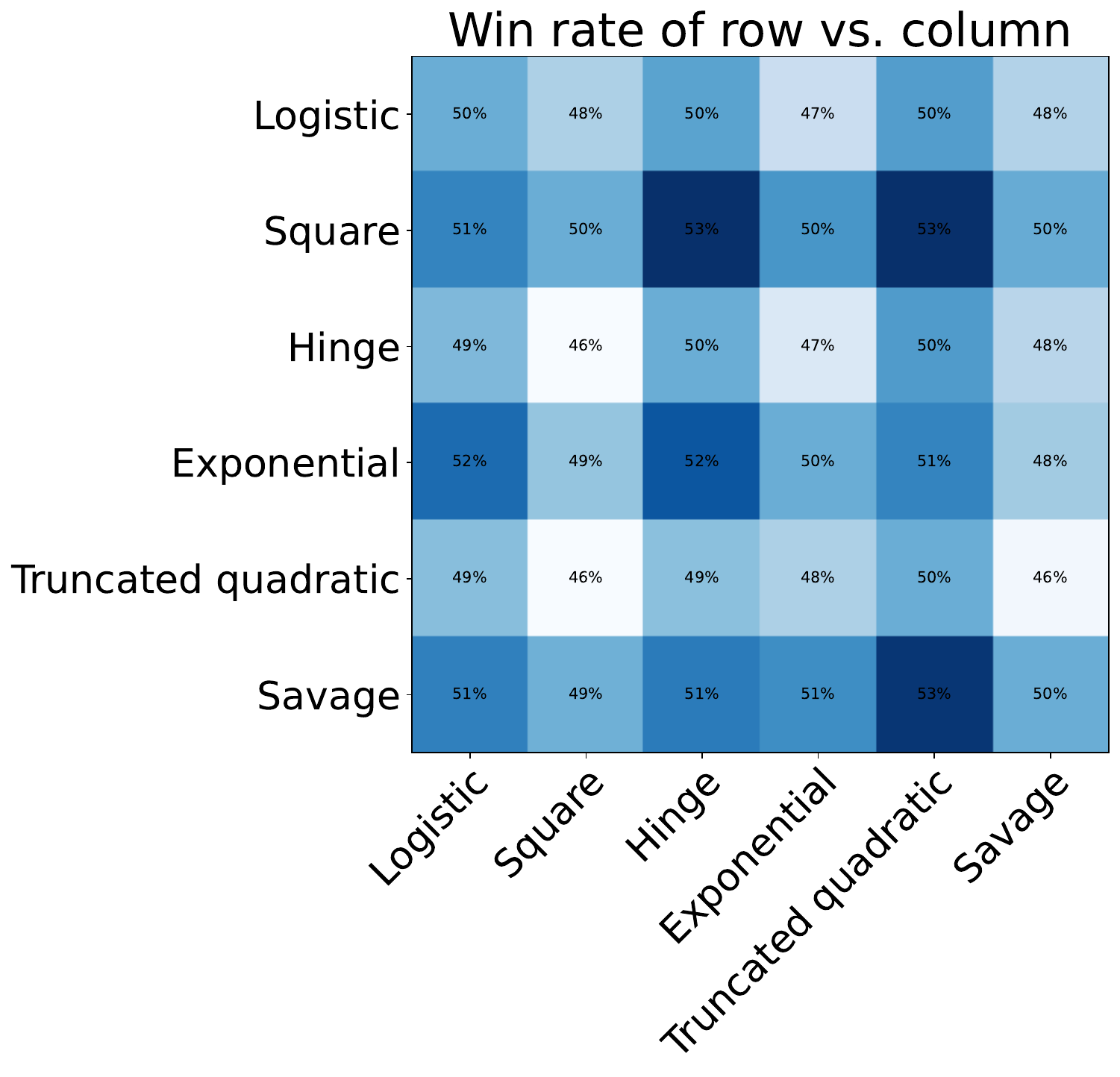}
    \caption{Win rate of various GPO methods against one another. We take all checkpoints at $\beta=0.1$ since this is a value where all variants have reasonable performance. We show the color coded win rates in a matrix.}
    \label{fig:win-rate-sxs}
\end{figure}

\section{Proof and derivations of theoretical results}

We provide more detailed proof to a few important theoretical results in the paper.
\label{appendix:proof}

\thmequivalence*
\begin{proof}
The proof is straightforward. Indeed, note that if we seek to minimize Eqn~\eqref{eq:rm} with $r_\phi$, we can reparameterize the reward function as $r_\phi(y)=\beta \log \frac{\pi_\theta(y)}{\pi_\text{ref}(y)}+z$ with normalizing constant $z$ that depends on $\pi_\theta$. Then if $r_\phi^\ast$ is the global minimizer to Eqn~\eqref{eq:rm}, the corresponding $\pi_\theta(y)\propto\pi_\text{ref}(y)\exp(\beta^{-1}r_\phi^\ast(y))$ must be the global minimizer to Eqn~\eqref{eq:offline-loss}.

\subsection{Derivation of the gradient of KL divergence and $\mu$-weighted squared loss}

By definition we have $\mathbb{KL}\left(\pi_\theta,\pi_\text{ref}\right)=\mathbb{E}_{y\sim \pi_\theta}\left[\log \frac{\pi_\theta(y)}{\pi_\text{ref}(y)}\right]$, its gradient contains two terms
\begin{align*}
    \nabla_\theta \mathbb{KL}\left(\pi_\theta,\pi_\text{ref}\right) = \mathbb{E}_{y\sim \pi_\theta}\left[\log \frac{\pi_\theta(y)}{\pi_\text{ref}(y)}\nabla_\theta \log\pi_\theta(y)\right] + \underbrace{\mathbb{E}_{y\sim \pi_\theta}\left[\nabla_\theta \log \pi_\theta(y)\right]}_{=0}
\end{align*}
The second term vanishes because it is the expectation of a score function with respect to the distribution itself. Meanwhile, for the $\mu$-weighted squared loss, we rewrite the original definition as
\begin{align*}
    \frac{1}{2}\mathbb{E}_{(y_w,y_l)\sim\mu}\left[\rho_\theta^2\right] =  \frac{1}{2}\mathbb{E}_{(y_1,y_2)\sim\mu}\left[\left(\log \frac{\pi_\theta(y_1)}{\pi_\text{ref}(y_1)} - \log \frac{\pi_\theta(y_2)}{\pi_\text{ref}(y_2)}\right)^2\right],
\end{align*}
where the equality is based on the fact that the order of $(y_w,y_l)$ does not impact the expectation. Now, taking the gradient of the above loss with $\mu=\pi_\theta$,
\begin{align*}
    \mathbb{E}_{(y_1,y_2)\sim\pi_\theta}\left[\nabla_\theta \frac{1}{2}\rho_\theta^2\right] &= \mathbb{E}_{(y_1,y_2)\sim\mu}\left[\frac{1}{2}\left(\log \frac{\pi_\theta(y_1)}{\pi_\text{ref}(y_1)} - \log \frac{\pi_\theta(y_2)}{\pi_\text{ref}(y_2)}\right)\left(\nabla_\theta \log \pi_\theta(y_1) - \nabla_\theta \log \pi_\theta(y_2)\right)\right], \\
    &=_{(a) }\frac{1}{2}\mathbb{E}_{(y_1,y_2)\sim\pi_\theta}\left[\log \frac{\pi_\theta(y_1)}{\pi_\text{ref}(y_1)} \nabla_\theta \log \pi_\theta(y_1) + \log \frac{\pi_\theta(y_2)}{\pi_\text{ref}(y_2)} \nabla_\theta \log \pi_\theta(y_2)\right] \\
    &=_{(b)}  \mathbb{E}_{y\sim \pi_\theta}\left[\log \frac{\pi_\theta(y)}{\pi_\text{ref}(y)}\nabla_\theta \log\pi_\theta(y)\right].
\end{align*}
Here, (a) follows from the fact the cross term vanishes because $y_1,y_2$ are independent; (b) follows from the fact that $y_1,y_2$ are identically distributed. This proves the desired equality in Eqn~\eqref{eq:kl-mu}.

\paragraph{Relation to results from \citep{richter2020vargrad}.} A highly related result has been derived in \citep{richter2020vargrad}, relating the gradient of the KL divergence to the gradient of the variance of the log ratio. We provide a simple derivation here. Note that when $\mu=\pi_\theta$, the $\mu$-weighted squared loss indeed evaluates to a variance
\begin{align*}
    \mathbb{E}_{(y_w,y_l)\sim\mu}\left[ \frac{1}{2}\rho_\theta^2\right] = \mathbb{V}\left[\log\frac{\pi_\theta(y)}{\pi_\text{ref}(y)}\right].
\end{align*}
To see this note that if $Y,Y'$ are i.i.d. samples then $\frac{1}{2}\mathbb{E}\left[(Y-Y')^2\right]=\mathbb{V}\left[Y\right]$.
\end{proof}

\subsection{Discussion on Taylor expansions of the GPO losses}

Assume that $f$ is smoothly differentiable and convex, and $f'(0) < 0$, then the GPO problem with the second order Taylor expansion recovers the squared loss with $\beta' = \frac{f''(0)\beta}{|f'(0)|}$. Note that the squared loss is effectively the IPO loss.

To see this, by the second order Taylor approximation to $f$ around $\rho_\theta=0$, we have
\begin{align*}
\mathbb{E}_{(y_w,y_l)\sim \mu}\left[f(\beta\rho_\theta)\right] 
&\approx f(0) + f'(0)\beta\cdot\mathbb{E}_{(y_w,y_l)\sim \mu}\left[\rho_\theta\right] + \frac{f''(0)\beta^2}{2}\cdot\mathbb{E}_{(y_w,y_l)\sim \mu}\left[\rho_\theta^2\right] \\
&= f(0) + \frac{f'(0)^2}{2f''(0)}\left(\frac{f''(0)}{|f'(0)|}\beta\mathbb{E}_{(y_w,y_l)\sim \mu}\left[\rho_\theta\right] - 1\right)^2 - \frac{f'(0)^2}{2f''(0)} \\
&\equiv_{(a)} \mathbb{E}_{(y_w,y_l)\sim \mu}\left[ \left(\frac{f''(0)\beta}{|f'(0)|}\rho_\theta - 1\right)^2\right],
\end{align*}
where for (a) we have rearranged terms and the equivalence is up to constants.
Indeed, we see that the Taylor-expanded GPO loss is equivalent to the IPO loss with $\beta'$ as defined above.

\section{Discussion on Bayes consistency for the learned reward model}\label{appendix:bayes}

Here we provide a brief background on Bayes consistency. Using the notation from Section~\ref{sec:derivation}, we consider binary classification loss of the following form with a convex function $f$
\begin{align*}
    \mathbb{E}\left[f\left(\hat{\ell}(z)\cdot \ell\right)\right]
\end{align*}
where $l\in\{-1,1\}$ is the ground-truth label and $\hat{\ell}(z)$ is the prediction. The Bayes optimal classifier, which minimizes the 0-1 classification error, depends on the probability $p(\ell=1|z)$, which is $\hat{\ell}^\ast(z)=\text{sign}\left(2p(\ell=1|z) - 1\right)$. The Bayes consistency result \citep{rosasco2004loss,bartlett2006convexity} state the following.
\begin{restatable}{theorem}{bayesconsistency}\label{thm:bayes-consistency} (\textbf{Bayes consistency}) Assume $f$ is convex, and continuously differentiable and $f'(0)<0$. Then 
    let $\hat{\ell}(z)$ be the global minimizer to the binary classification loss, then $\text{sign}\left(\hat{\ell}(z)\right)=\hat{\ell}^\ast(z)$.
\end{restatable}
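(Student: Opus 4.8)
The plan is to exploit the fact that minimizing the expected loss $\mathbb{E}\left[f\left(\hat{\ell}(z)\cdot\ell\right)\right]$ over all measurable prediction functions decouples into an independent pointwise optimization at each feature value $z$. First I would fix $z$, write $\eta = p(\ell=1\mid z)$, and record that the conditional risk, viewed as a function of the scalar prediction $\alpha = \hat{\ell}(z)$, is
\begin{align*}
    C_\eta(\alpha) = \eta f(\alpha) + (1-\eta) f(-\alpha).
\end{align*}
Since $f$ is convex and $\alpha\mapsto f(-\alpha)$ is convex as well, $C_\eta$ is a nonnegative combination of convex functions and hence convex in $\alpha$. This is the structural fact that lets the sign of a single derivative control the location of the minimizer.

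Next I would compute, using continuous differentiability of $f$, that $C_\eta'(\alpha) = \eta f'(\alpha) - (1-\eta) f'(-\alpha)$, and evaluate this at the origin to obtain the clean identity $C_\eta'(0) = (2\eta - 1)\,f'(0)$. Here the hypothesis $f'(0)<0$ enters decisively: the sign of $C_\eta'(0)$ is exactly \emph{opposite} to that of $2\eta-1$. Concretely, when $\eta>\tfrac12$ we have $C_\eta'(0)<0$, so by convexity $C_\eta$ is strictly decreasing in a neighborhood of $0$ and every minimizer $\alpha^\ast$ must satisfy $\alpha^\ast>0$; symmetrically, when $\eta<\tfrac12$ we have $C_\eta'(0)>0$ and every minimizer satisfies $\alpha^\ast<0$. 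Comparing against the Bayes-optimal rule $\hat{\ell}^\ast(z)=\text{sign}\left(2\eta-1\right)$ then yields $\text{sign}\left(\alpha^\ast\right)=\hat{\ell}^\ast(z)$ in both regimes, which is the claim.

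The step I expect to be the main obstacle is making the convexity-to-sign argument fully rigorous when the infimum of $C_\eta$ is not attained at a finite point: for losses such as the logistic or exponential, the minimizing $\alpha^\ast$ runs off to $\pm\infty$ on well-separated data. I would handle this by arguing directly from convexity rather than from a first-order stationarity condition, namely that $C_\eta'(0)<0$ forces $C_\eta(\alpha)<C_\eta(0)\le C_\eta(\alpha')$ for all small $\alpha>0$ and all $\alpha'\le 0$, so that no minimizing sequence can have a nonpositive limit and the sign of the (possibly infinite) optimum is unambiguously $+1$; the mirror argument covers $\eta<\tfrac12$. The boundary case $\eta=\tfrac12$ is genuinely degenerate: there $2\eta-1=0$ and the target sign is not well defined, so I would simply observe that $C_{1/2}$ is symmetric about $0$ with a minimizer at $0$, consistent with the convention $\text{sign}(0)=0$.
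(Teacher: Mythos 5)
Your proposal is correct and takes essentially the same route as the paper, which gives no self-contained proof but defers to \citet{rosasco2004loss} while sketching precisely this argument: reduce to the pointwise conditional risk $C_\eta(\alpha)=\eta f(\alpha)+(1-\eta)f(-\alpha)$, compute $C_\eta'(0)=(2\eta-1)f'(0)$, and use convexity together with $f'(0)<0$ to force every minimizer onto the correct side of the origin. Your extra care with the non-attained infimum (minimizers escaping to $\pm\infty$) and the degenerate case $\eta=\tfrac{1}{2}$ simply makes the cited sketch rigorous, so nothing needs fixing.
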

We refer readers to \citet{rosasco2004loss} for the easy-to-follow proof. The high level idea is to show that at the global minimizer, assuming $p(\ell=1|z)>1/2$, we should expect $\hat{\ell}(z)>0$. Intuitively, this should be the case since $f'(0)<0$ and is convex, so the minimizer should be at the right hand side of the origin.

\subsection{Discussion of pairwise preference model}

We now discuss properties of the pairwise preference model, where the prediction $\hat{\ell}(y_1,y_2)$ is parameterized as a general bi-variate function $\hat{\ell}(y_1,y_2)=r_\phi(y_1,y_2)$ of $y_1,y_2$ rather than the difference of two univariate functions $r_\phi(y_1)-r_\phi(y_2)$. We conjecture that some of the results will transfer to pointwise reward models in practice, e.g., when the BT assumption approximately makes sense. Making precise of such approximations is left to future work.

An intuitive requirement for the prediction $\hat{\ell}(y_1,y_2)$ is that it gets the sign of the preference correct, which is defined through $p(y_1\succ y_2)$. More concretely, one might seek the follow property
\begin{align}
    \text{sign}\left(\hat{\ell}(y_1,y_2)\right) = \text{sign}\left(p\left(y_1\succ y_2\right)-1/2\right) \label{eq:correct-sign}
\end{align}

Interestingly, the right-hand side of Eqn~\eqref{eq:correct-sign} corresponds to the Bayes optimal classifier, which minimizes the classification loss in Eqn~\eqref{eq:0-1}. The convex loss functions we consider in this work (e.g., all examples in Table~\ref{table:losses}) all satisfy the property that if $l(y_1,y_2)$ is parameterized as a general preference model (rather than a pointwise reward model, see e.g., \citep{munos2023nash}), then by minimizing the loss we find $\hat{\ell}(y_1,y_2)$ that satisfies Eqn~\eqref{eq:correct-sign}, a result stemming from Bayes consistency \citep{rosasco2004loss,bartlett2006convexity}. 

However, even if different loss functions produce the same sign, the predictions $\hat{\ell}(y_1,y_2)$ can differ drastically depending on $f$. In the main paper we have provided a case study example of logistic loss vs. hinge loss, borrowing inspirations from the study in \citet{hastie2009elements}.

\end{document}